\documentclass[twoside,11pt]{article}

\usepackage{blindtext}
\usepackage{amsmath}
\usepackage{tcolorbox}
\usepackage{algorithm}
\usepackage{algorithmic}
\usepackage{amsthm} 
\usepackage{dsfont}
\usepackage{amsfonts} 
\newtheorem{assumption}{Assumption}
\usepackage{natbib}
\usepackage{graphicx}
\usepackage{natbib}
\usepackage{booktabs}
\usepackage{float}
\usepackage{multirow}
\usepackage{subcaption}
\tcbuselibrary{skins}
\tcbset{
    generator_box/.style={
        enhanced,
        colback=yellow!10,
        colframe=yellow!50!black,
        fonttitle=\bfseries,
        title=Generator Prompt,
        attach boxed title to top left={yshift=-\tcboxedtitleheight/2},
        boxed title style={size=small,colback=yellow!50!black,colframe=yellow!50!black},
        coltitle=white,
    },
    discriminator_box/.style={
        enhanced,
        colback=blue!10,
        colframe=blue!50!black,
        fonttitle=\bfseries,
        title=Discriminator Prompt,
        attach boxed title to top left={yshift=-\tcboxedtitleheight/2},
        boxed title style={size=small,colback=blue!50!black,colframe=blue!50!black},
        coltitle=white,
    }
}
\usepackage[preprint]{jmlr2e}

\newcommand{\argmin}{\operatornamewithlimits{arg\,min}}

\usepackage{lastpage}

\begin{document}

\title{Prompt-Dependent Ranking of Large Language Models with Uncertainty Quantification}

\author{\name Angel Rodrigo Avelar Menendez \email aavelarm@g.ucla.edu\\
        \addr Department of Statistics and Data Science, UCLA
        \AND
        \name Yufeng Liu \email yufliu@umich.edu\\
        \addr Department of Statistics, University of Michigan
        \AND
        \name Xiaowu Dai \email daix@ucla.edu\\
        \addr Department of Statistics and Data Science, and of Biostatistics, UCLA}

\maketitle     
\renewcommand{\thefootnote}{\fnsymbol{footnote}}

\begin{abstract}
\noindent
Rankings derived from pairwise comparisons are  central to many economic and computational systems, including model selection, procurement, routing, and reputation mechanisms. In the context of large language models (LLMs), rankings are typically constructed from human preference data and presented as leaderboards that guide deployment decisions. However, existing approaches rely on point estimates, implicitly treating rankings as fixed objects despite substantial estimation noise and context-dependent performance variation. Acting on such rankings can lead to misallocation and welfare loss when apparent differences are not statistically meaningful.
We study prompt-dependent ranking inference under pairwise human preferences and develop a framework for decision-safe rankings with statistically valid uncertainty guarantees. We model preferences using a contextual Bradley–Terry–Luce model in which the latent utility of each model depends on the input prompt. Rather than targeting point estimates of utilities, we directly conduct inference on induced rankings, constructing confidence sets based on simultaneous confidence intervals for pairwise utility differences. This approach yields statistically valid marginal and simultaneous confidence sets for prompt-specific ranks.
Our framework connects recent advances in rank inference to contextual preference learning and provides tools for robust ranking-based decision-making. Empirically, using large-scale human preference data from LLM evaluations, we show that rankings vary substantially across prompt characteristics and that many apparent rank differences are not statistically distinguishable. We further demonstrate how uncertainty-aware rankings identify dominance only when supported by the data and otherwise return partial orders. Overall, our results show that incorporating uncertainty into prompt-dependent ranking for LLMs is essential for reliable economic and computational decisions based on human preference data.
\end{abstract}

\begin{keywords}
    Human preference, Pairwise comparison, Ranking inference, Uncertainty quantification.
\end{keywords}

\section{Introduction}
\label{sec:intro}
Rankings constructed from pairwise comparisons  play a central role in many  economic and computational systems \citep{ballinger1997decisions, shah2018simple, chau2022spectral}. They are routinely used to guide selection, allocation, routing, and reputation mechanisms in a wide range of applications, including procurement, hiring, admissions, online platforms, and algorithmic benchmarking \citep{goldstein1996league, gupta2002multiple,xie2009confidence, gu2023invidious}. In these settings, rankings are not merely descriptive summaries of observed data but prescriptive objects that map information into concrete actions. As a result, errors in ranking can propagate through downstream decision rules and induce persistent misallocation, distorted incentives, and welfare losses \citep{klein2020joint, andrews2024inference}. In particular, rankings constructed without accounting for statistical uncertainty may appear decisive while being driven by noise, leading decision makers to overcommit to spurious orderings or to discard alternatives that are in fact indistinguishable.

Large language models (LLMs) provide a timely and consequential instance of this phenomenon \citep{chang2024survey,naveed2025comprehensive}. LLMs are commonly evaluated using pairwise human preference data, and the resulting rankings are presented as leaderboards that inform deployment, model routing, and system design \citep{stiennon2020learning, bai2022traininghelpfulharmlessassistant, Xiao02102025, zrnic2025flexible, lu2025}. Despite their widespread influence, existing LLM rankings typically rely on point estimates of latent model quality. Such approaches implicitly treat rankings as fixed and well-identified, even though they are derived from noisy human judgments and finite samples. As a result, many reported rank differences may not be statistically meaningful, yet they are still acted upon in downstream decisions \citep{boubdir2024elo, chiang2024chatbot, chatzi2024prediction}.

A further complication is that LLM performance is inherently context-dependent \citep{dubois2023alpacafarm}. The relative quality of two models can vary substantially across prompts, task types, or query characteristics \citep{zheng2023judging}. Nevertheless, most existing ranking systems assign each model a single global utility, effectively averaging performance across heterogeneous inputs. This practice obscures economically relevant variation and can induce systematic errors when rankings are used to guide prompt-specific decisions, such as routing a query to a particular model or selecting a model for a specialized task  \citep{lu-etal-2024-routing, chen2024routerdc, shnitzer2023largelanguagemodelrouting}. In these applications, acting on point-estimate rankings may lead to unnecessary switching or suboptimal allocation when apparent differences are not statistically supported \citep{chen2024routerdc,ong2024routellm, stripelis2024tensoropera}. In contrast, reliable uncertainty quantification for rankings enables robust selection rules that exploit dominance when it is supported by the data and otherwise avoid overconfident decisions.

This paper studies \emph{prompt-dependent ranking inference} under pairwise human preference feedbacks and develops a framework for constructing decision-safe rankings with valid statistical guarantees. We model preferences using a contextual Bradley–Terry–Luce (BTL) model in which each model’s latent utility depends on the input prompt. Rather than focusing on point estimation of utilities, our inferential target is the ranking induced by these utilities conditional on a given prompt, together with uncertainty measures that are valid for the ranking itself. That is, for a given input prompt, we aim to determine not only which models are preferred, but also how uncertain those rankings are.

Providing statistically valid inference for rankings is fundamentally difficult because ranks are non smooth functionals of latent utilities. Small perturbations in estimated utilities can induce discrete changes in the induced ordering, making ranking inference highly sensitive to estimation errors \citep{bazylik2025finite}. As a consequence, standard approaches that infer ranks indirectly from marginal confidence intervals for individual utilities are generally invalid. A growing literature shows that such procedures can produce confidence sets that are either overly conservative or misleading, with coverage that exceeds or fails to attain the nominal level depending on the underlying utilities \citep{fan2024covariates, fan2023spectral}. The key issue is that rankings are determined by relative comparisons rather than absolute levels. Valid inference therefore requires confidence sets constructed directly for \emph{pairwise utility differences}, which directly determine the ranking \citep{mogstad2024inference}. We adopt this perspective and extend it to a contextual setting, where LLM utilities depend on prompt covariates and the ranking itself varies with covariates.

Our approach constructs simultaneous confidence intervals for prompt-dependent utility differences and uses them to derive marginal and simultaneous confidence sets for LLM ranks. These confidence sets have correct asymptotic coverage and naturally accommodate partial identification: when the data do not support a strict ordering between models, the resulting ranking reflects this uncertainty rather than imposing an arbitrary tie-breaking. From a decision-theoretic perspective, this yields rankings that can be safely used as inputs to downstream mechanisms without overstating evidence. Moreover, our framework allows rankings to vary with the prompt, capturing task- and input-specific performance differences, and directly targets inference on ranks themselves, avoiding the loss of statistical power that arises when rankings are inferred indirectly from utility-wise confidence intervals.

We demonstrate our framework using large-scale human preference data from LLM evaluations. Our empirical results show that rankings vary substantially across prompt characteristics such as length and semantic category, and that many apparent differences between models are not statistically distinguishable once uncertainty is accounted for. At the same time, our method identifies cases of clear dominance, where a model is reliably preferred for a given class of prompts. These findings highlight the limitations of point-estimate leaderboards and demonstrate the value of uncertainty-aware, context-dependent ranking inference for decision-making.

Overall, this paper makes three main contributions.
\begin{itemize}
\item We formalize prompt-dependent ranking of LLMs as a problem of statistical rank inference under a contextual pairwise comparison model, treating rankings as random objects rather than fixed summaries.
\item We develop inference procedures that construct valid marginal and simultaneous confidence sets for prompt-specific ranks based on confidence intervals for utility differences, ensuring correct coverage for the ranking itself.
\item Through empirical analysis of large-scale human preference data, we show how uncertainty-aware rankings alter conclusions drawn from LLM leaderboards and provide a principled foundation for robust ranking-based decisions.
\end{itemize}

The remainder of the paper is organized as follows. Section~\ref{sec:problemformulation} formalizes the prompt-dependent rankings and introduces a contextual BTL model in which rankings are treated as actions taken by a decision maker under uncertainty. Section~\ref{sec:methodology} develops the estimation and inference methodology, including identification, constrained maximum likelihood estimation, and the construction of simultaneous confidence sets for prompt-dependent utility differences and induced rankings. Section~\ref{sec:theory} establishes the theoretical guarantees of the proposed framework, providing asymptotic normality of the estimator, valid coverage for utility and rank confidence sets, and an analysis of ranking behavior under extreme prompt extrapolation. Section~\ref{sec:numericalexp} presents empirical results on large-scale human preference data for LLM evaluation, illustrating how rankings vary with prompt characteristics and how uncertainty-aware inference alters conclusions drawn from point-estimate leaderboards. Section~\ref{sec:relatedwork} describes the related works, and Section~7 concludes the paper with a discussion on future directions.

\section{Problem Formulation}
\label{sec:problemformulation}
We consider a setting in which a decision maker must rank a finite set of LLMs, in order to guide downstream decisions such as selection, routing, or deployment. The decision maker observes noisy and context dependent pairwise human preferences and must construct rankings that are valid for a given prompt while accounting for statistical uncertainty. In this formulation, the prompt represents an observable state of the environment, the ranking is the decision maker’s action, and uncertainty arises from finite and heterogeneous preference data.

\subsection{Contextual Pairwise Preference Model}

There are $M$ LLMs indexed by $m \in [M] := \{1, \ldots, M\}$. Preferences are observed through $L$ pairwise human comparisons. For the $l$th comparison with $l \in [L] = \{1, \dots, L\}$, a pair of models $a_l = (i,j)$ is evaluated under a prompt dependent covariate vector $x_l \in \mathbb{R}^d$, representing observable characteristics of the input prompt, such as length, semantic category, or embedding-based features. The observed outcome
\[
y_{a_l} \in \{0,1\}
\]
indicates whether model $j$ is preferred over model $i$.

Each model $m$ is associated with a prompt dependent latent utility function
\[
\theta_m(x) : \mathbb{R}^d \to \mathbb{R}.
\]
Preferences are modeled using a contextual Bradley--Terry--Luce (BTL) model \citep{bradley1952}. Conditional on the prompt $x$, the probability that model $j$ is preferred to model $i$ is given by
\begin{equation}
\mathbb{P}(y = 1 \mid x, (i,j))
=
\frac{e^{\theta_j(x)}}{e^{\theta_j(x)} + e^{\theta_i(x)}}.
\label{eq:btl}
\end{equation}
Under model \eqref{eq:btl}, rankings are driven by latent utility differences: the probability that model $j$ is preferred to $i$ increases monotonically with $\theta_j(x) - \theta_i(x)$.

\subsection{Prompt Dependent Rankings as Actions}

For a given prompt $x$, the latent rank of model $i$ is defined as
\begin{equation*}
r_i(x)
\;\equiv\;
1 + \sum_{j \in [M]} \mathbf{1}\{\theta_j(x) > \theta_i(x)\},
\label{eq:true_rank}
\end{equation*}
where rank $1$ corresponds to the most preferred model. The vector $r(x) = (r_1(x), \ldots, r_M(x))$ represents the ranking that would be optimal for the decision maker if the latent utilities $\theta_j(x)$'s were known.

In practice, utilities are unobserved and must be inferred from finite data. The decision maker therefore acts on estimated rankings constructed from observed preferences. Rankings are not merely descriptive summaries of the data. They serve as inputs to downstream decision rules, such as selecting a single model, routing prompts to models based on rank, or allocating resources across models \citep{lu-etal-2024-routing, chen2024routerdc, shnitzer2023largelanguagemodelrouting, ong2024routellm}. This makes ranking uncertainty a first order concern.

\subsection{Ranking Inference Targets}

Given a significance level $\alpha \in (0,1)$ and a fixed prompt $x$, the decision maker seeks to construct confidence sets for rankings that explicitly account for estimation uncertainty.
We consider two inferential targets: marginal and simultaneous  rank confidence sets.

\paragraph{Marginal rank confidence sets.}
For a given model $i$, a set $\mathcal{R}_{L,i}(x) \subseteq [M]$ is a marginal confidence set if
\begin{equation}
\lim_{L \to \infty}
\mathbb{P}\bigl(r_i(x) \in \mathcal{R}_{L,i}(x)\bigr)
\;\ge\;
1 - \alpha.
\label{eq:marginal_rank}
\end{equation}
Marginal confidence sets are appropriate when the decision maker is concerned with the position of a particular model, for example whether it belongs to the top $k$ models.

\paragraph{Simultaneous rank confidence sets.}
For any model $i\in[M]$, we have a confidence set $\mathcal{R}^{\mathrm{sim}}_{L,i}(x)\subseteq[M]$. Then a simultaneous confidence set takes the form
\[
\mathcal{R}^{\mathrm{sim}}_L(x)
=
\prod_{i=1}^M \mathcal{R}^{\mathrm{sim}}_{L,i}(x),
\]
and satisfies
\begin{equation}
\lim_{L \to \infty}
\mathbb{P}\bigl(r(x) \in \mathcal{R}^{\mathrm{sim}}_L(x)\bigr)
\;\ge\;
1 - \alpha.
\label{eq:joint_rank}
\end{equation}
where $r(x)=(r_1(x),\ldots,r_M(x))$.
Simultaneous confidence sets are required when decisions depend on the relative ordering of multiple models simultaneously.

These confidence sets naturally induce partial orders when the data do not support a strict ranking. Rather than forcing an arbitrary tie-breaking, they allow the decision maker to exploit statistically supported dominance relationships while avoiding overconfident decisions.

\subsection{Illustrative Example: Ranking LLMs by Prompt Length}
\label{sec:illegpromplength}
\begin{figure}[t]
    \centering
        \begin{subfigure}[t]{0.48\columnwidth}
        \centering
            \includegraphics[width=\linewidth]{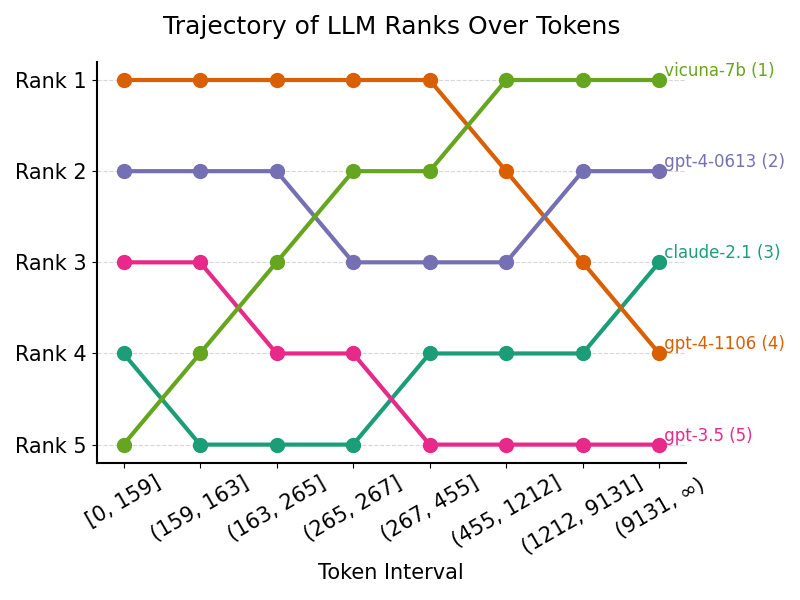}
            \caption{}
            \label{fig:token_length_bump_chart}
        \end{subfigure}
    \hfill
        \begin{subfigure}[t]{0.48\columnwidth}
            \includegraphics[width=\linewidth]{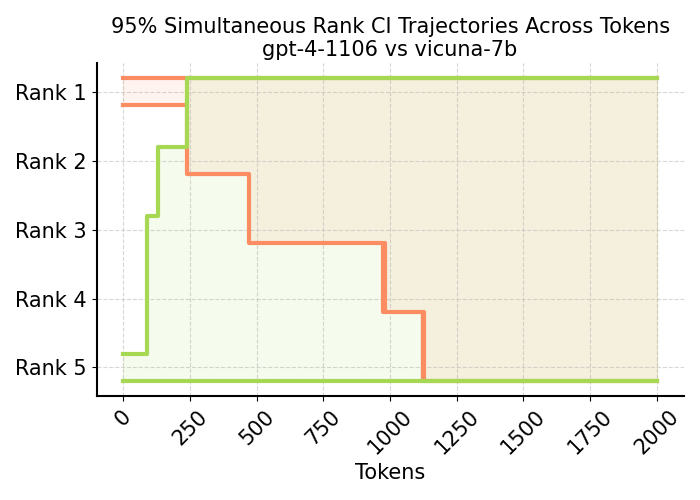}
            \caption{}
            \label{fig:token_length_ranking_ci_vicuna}
        \end{subfigure}
        \caption{
        \textbf{Illustrative example: prompt-dependent rankings and ranking uncertainty.}
        (\subref{fig:token_length_bump_chart})
        Estimated rankings of five LLMs as a function of prompt length, measured by token count.
        Rankings are obtained by ordering the fitted prompt-dependent utilities $\hat{\theta}_i(x)$,
        with rank changes occurring at intersections of the estimated utility functions.
        (\subref{fig:token_length_ranking_ci_vicuna})
        Prompt-dependent 95\% simultaneous confidence set for the rank of Vicuna-7b and GPT-4-1106 as a function of prompt length.
        For short prompts, the rank confidence sets are narrow, indicating statistically supported dominance relationships. As prompt length increases, the confidence sets widen, reflecting growing uncertainty and partial identification of the ranking.
        }
    \label{fig:illustrative_example}
\end{figure}

We illustrate the problem formulation with a concrete example that highlights why prompt dependence and ranking uncertainty are first order concerns for decision making. We consider a dataset of LLM evaluations from the Arena Human Preference dataset \citep{chiang2024chatbot}, which contains approximately 55,000 human annotated pairwise comparisons of LLM responses. Each comparison is associated with an input prompt. We focus on five LLM models: Claude~2.1, GPT-4-1106, GPT-4-0613, GPT-3.5-Turbo-0613, and Vicuna-7b.
Prompt lengths are computed using the BGE-base-en-v1.5 tokenizer \citep{bge_embedding}. 
As a simple and interpretable prompt characteristic, we use the \emph{length} of the prompt measured by the number of tokens. Prompt length is a natural covariate that may affect model performance, as longer prompts can place greater demands on reasoning, context handling, or coherence.

Figure~\ref{fig:illustrative_example} illustrates the estimated rankings as a function of prompt length obtained by ordering the fitted utilities
$\hat{\theta}_i(x)$. The figure shows that the predicted ranking changes as the prompt length increases, with rank reversals occurring at the intersections of the estimated utility functions. From the perspective of the decision maker, this implies that a model that is preferred for short prompts may no longer be preferred for longer prompts. However, point estimate rankings alone are insufficient for guiding decisions. Figure~\ref{fig:illustrative_example} further reports \emph{simultaneous confidence sets} for the ranks of two representative models as a function of prompt length, with the method developed in Section \ref{sec:methodology}. The resulting rank confidence sets widen as prompt length increases, reflecting greater uncertainty
in relative model performance for longer prompts. In particular, beyond a certain range of prompt lengths, the data do not support a statistically meaningful ordering among several models, and the induced ranking becomes only partially identified.

This example illustrates two features of the decision problem studied in this paper. First, rankings are inherently prompt dependent, and ignoring context can obscure economically relevant variation in model performance. Second, ranking uncertainty is unavoidable and must be accounted for explicitly when rankings are used as inputs to downstream decisions. Treating point estimate rankings as fixed objects can lead to overconfident conclusions, whereas uncertainty-aware rankings induce partial orders that more accurately reflect the information contained in the data.

\section{Methodology}
\label{sec:methodology}
This section develops the estimation and inference procedures for the decision
problem in Section~\ref{sec:problemformulation}. The decision maker’s action is a
ranking, and the economically relevant objects are therefore the \emph{induced
ranks} rather than the underlying utility parameters. Our objective is not only
to estimate prompt-dependent utilities, but to provide statistically valid
uncertainty quantification for the rankings implied by those utilities. We first
specify a tractable model for prompt-dependent utilities, then describe estimation from contextual pairwise comparisons, and finally construct
simultaneous confidence sets for utility differences that translate into confidence
sets for ranks.

\subsection{Prompt-Dependent Utility Specification}
We model each model’s latent utility as an explicit function of observed prompt
characteristics. For LLM $i\in[M]$ and prompt covariate $x\in\mathbb{R}^d$, the
utility is
\begin{equation}
\theta_i(x) = \beta_{0i} + x^\top \beta_i,
\label{eq:utility_linear}
\end{equation}
where $\beta_i$ captures how model $i$’s relative performance changes with the prompt $x$ and $\beta_{0i}$ captures intrinsic performance not explained by the prompt.

This specification is motivated by two considerations. First, it is an
interpretable approximation to heterogeneous LLM performance, allowing rankings
to vary systematically with observable context. Second, it yields tractable
estimation and inference in large-scale pairwise comparison settings, which is
essential for constructing confidence sets for rankings. The framework also
admits richer parametric utility functions (e.g., polynomial or spline bases)
under standard regularity conditions \citep{wahba1990spline, dai2023orthogonalized}.

\subsection{Estimation with Pairwise Comparisons}
For each comparison $l\in[L]$, we observe a pair of models $a_l=(i,j)$, a
prompt-dependent covariate vector $x_l$, and a binary preference outcome
$y_{a_l}\in\{0,1\}$ indicating whether model $j$ is preferred to model $i$.
Under the contextual BTL model~\eqref{eq:btl}, preferences depend only on the
utility difference $\theta_j(x_l)-\theta_i(x_l)$. 
To express the likelihood compactly, define the stacked parameter vector
\begin{equation*}
\tilde{\beta}
=
(\beta_{01},\ldots,\beta_{0M}, \beta_1^\top,\ldots,\beta_M^\top)^\top
\in\mathbb{R}^{M+Md},
\label{eq:beta_stacked}
\end{equation*}
and the corresponding design vector
\begin{equation}
\tilde{x}_{a_l}
=
\bigl(e_j,\, e_j\otimes x_l\bigr)^\top
-
\bigl(e_i,\, e_i\otimes x_l\bigr)^\top,
\label{eq:design_vector}
\end{equation}
where $e_i$ denotes the $i$th standard basis vector in $\mathbb{R}^M$.
Then
\begin{equation*}
\theta_j(x_l)-\theta_i(x_l) = \tilde{\beta}^\top \tilde{x}_{a_l}.
\end{equation*}
Let $\sigma(z)=(1+e^{-z})^{-1}$. The negative log-likelihood is
\begin{equation*}
\ell(\tilde{\beta})
=
\sum_{l=1}^L
-\Big[
y_{a_l}\log\sigma(\tilde{\beta}^\top\tilde{x}_{a_l})
+
(1-y_{a_l})\log\bigl(1-\sigma(\tilde{\beta}^\top\tilde{x}_{a_l})\bigr)
\Big].
\label{eq:log_likelihood}
\end{equation*}
This regression problem has the regressors that encode both
the compared models and the prompt.


\subsection{Identification and Ranking}
Only differences in utilities are identified from pairwise choice data. The model
is invariant to additive shifts of the form
$(\beta_{0i},\beta_i)\mapsto(\beta_{0i}+c_0,\beta_i+c_\beta)$ applied uniformly
over $i\in[M]$. To ensure identification, we impose the normalization
\begin{equation}
\sum_{i=1}^M \beta_{0i}=0,
\qquad
\sum_{i=1}^M \beta_i=0.
\label{eq:normalization}
\end{equation}
This fixes a reference utility level without affecting any economically relevant
quantity, since rankings and choice probabilities depend only on utility
differences. 
Let $\Theta$ denote the constrained parameter space induced by
\eqref{eq:normalization}. The constrained MLE is
\begin{equation}
\hat{\beta} = \argmin_{\tilde{\beta}\in\Theta} \ell(\tilde{\beta}).
\label{eq:mle_definition}
\end{equation}
Let $\tilde{\beta}^* \in \Theta$ be the true parameter, defined as 
\begin{equation*}
\tilde{\beta}^* = \argmin_{\tilde{\beta}\in\Theta} \mathbb E[\ell(\tilde{\beta})].
\end{equation*}
\paragraph{\textbf{Constraint representation and projection.}}
Write the constraints as $C\tilde{\beta}=0$. The first constraint is
$\mathbf{1}_M^\top(\beta_{01},\ldots,\beta_{0M})^\top=0$ and the second is the
collection of $d$ constraints
$(I_d\otimes \mathbf{1}_M^\top)(\beta_1^\top,\ldots,\beta_M^\top)^\top=0$.
Equivalently,
\begin{equation*}
C=
\begin{pmatrix}
\mathbf{1}_M^\top & 0\\
0 & I_d\otimes \mathbf{1}_M^\top
\end{pmatrix},
\qquad
C\tilde{\beta}=0.
\label{eq:constraint_matrix}
\end{equation*}
Let $\mathcal{P}$ denote the orthogonal projection onto the feasible subspace
$\{\tilde{\beta}:C\tilde{\beta}=0\}$, given by
\begin{equation}
\mathcal{P} = I - C^\top(CC^\top)^{-1}C.
\label{eq:projection_matrix}
\end{equation}
This projection is used in inference to express asymptotic covariance matrices on the constrained parameter space.

\paragraph{\textbf{Point-estimate ranking as an action.}}
Given covariate $x$, the estimated utility of model $i$ is
\begin{equation*}
\hat{\theta}_i(x)=\hat{\beta}_{0i}+x^\top\hat{\beta}_i.
\end{equation*}
where $\hat{\beta}_{0i}$ and $\hat{\beta}_i$ are the corresponding components of the MLE $\hat{\beta}$ for model $i$ in \eqref{eq:mle_definition}. 
The induced point-estimate rank is
\begin{equation*}
\hat{r}_i(x)
=
1+\sum_{j\in[M]}\mathds{1}\{\hat{\theta}_j(x)>\hat{\theta}_i(x)\}.
\label{eq:rank_def}
\end{equation*}
This is the ranking a decision maker would implement if acting on estimated
utilities alone. Because ranks are non-smooth functionals of the utilities,
small estimation errors in utility differences can change the ordering, motivating inference procedures that directly quantify uncertainty in the induced ranks.


\subsection{Confidence Intervals for Utility Differences}
Rankings are determined by the signs of pairwise utility differences
$\theta_j(x)-\theta_i(x)$. We therefore begin by constructing simultaneous
confidence sets for a collection of utility differences at a fixed covariate
$x$. 

Let $S\subseteq\{(i,j)\in[M]\times[M]:i\neq j\}$ denote the set of ordered
pairs of interest (e.g., all pairs for joint rank inference).
For $(i,j)\in S$, define $\tilde{x}_{ij}$ as in \eqref{eq:design_vector} with
$x_l$ replaced by $x$, so that
$\theta_j(x)-\theta_i(x)=\tilde{\beta}^\top\tilde{x}_{ij}$ and its estimator
is $\hat{\beta}^\top\tilde{x}_{ij}$. Let $\hat{se}_{ij}(x)^2$ denote an estimate of the variance of the predicted
difference, computed as
\begin{equation*}
\hat{se}_{ij}(x)^2
=
\tilde{x}_{ij}^\top \hat{\Sigma} \tilde{x}_{ij},
\end{equation*}
where $\hat{\Sigma}$ is the bootstrap estimate of the covariance matrix of $\hat{\beta}$.
We construct rectangular simultaneous confidence sets using max-type statistics.
Define
\begin{equation}
\label{eq:max_stats}
\begin{aligned}
\mathcal{T}_{\mathrm{lower}}(x)
&\equiv
\max_{(i,j)\in S}
\frac{\tilde{x}_{ij}^\top(\hat{\beta}-\tilde{\beta}^*)}{\hat{se}_{ij}(x)},\\
\mathcal{T}_{\mathrm{upper}}(x)
&\equiv
\max_{(i,j)\in S}
\frac{\tilde{x}_{ij}^\top(\tilde{\beta}^*-\hat{\beta})}{\hat{se}_{ij}(x)},
\\
\mathcal{T}_{\mathrm{symm}}(x)
&\equiv
\max_{(i,j)\in S}
\frac{\bigl|\tilde{x}_{ij}^\top(\hat{\beta}-\tilde{\beta}^*)\bigr|}{\hat{se}_{ij}(x)}.
\end{aligned}
\end{equation}
Let $t_{\boldsymbol{\cdot}}(\alpha,x)$ denote the $(1-\alpha)$ quantile of
$\mathcal{T}_{\boldsymbol{\cdot}}(x)$. These critical values yield one-sided,
two-sided, and equivalence-type rectangular confidence intervals:
\begin{equation}
\label{eq:score_diff_cis}
\begin{aligned}
C_{\mathrm{lower},L}(1-\alpha,S,x)
&=
\prod_{(i,j)\in S}
\Bigl[
\tilde{x}_{ij}(x)^\top\hat{\beta}
-
\hat{t}_{\mathrm{lower}}(\alpha,x)\hat{se}_{ij}(x),
\ \infty
\Bigr), \\
C_{\mathrm{upper},L}(1-\alpha,S,x)
&=
\prod_{(i,j)\in S}
\Bigl(
-\infty,\ 
\tilde{x}_{ij}(x)^\top\hat{\beta}
+
\hat{t}_{\mathrm{upper}}(\alpha,x)\hat{se}_{ij}(x)
\Bigr],\\
C_{\mathrm{symm},L}(1-\alpha,S,x)
&=
\prod_{(i,j)\in S}
\Bigl[
\tilde{x}_{ij}(x)^\top\hat{\beta}
\pm
\hat{t}_{\mathrm{symm}}(\alpha,x)\hat{se}_{ij}(x)
\Bigr],
\\
C_{\mathrm{equiv},L}(1-\alpha,S,x)
&=
C_{\mathrm{lower},L}(1-\alpha/2,S,x)\ \cap\
C_{\mathrm{upper},L}(1-\alpha/2,S,x).
\end{aligned}
\end{equation}
We estimate the critical values $\hat{t}_{\boldsymbol{\cdot}}(\alpha,x)$ using a
parametric bootstrap: we simulate draws from the estimated limiting Gaussian
distribution $\mathcal{N}\!\left(0,\, \tilde{X}_S \hat{\Sigma} \tilde{X}_S^\top \right)$,  and then compute the empirical
$(1-\alpha)$ quantile of the corresponding max statistic in
\eqref{eq:max_stats}. This construction targets simultaneous coverage across all pairs in $S$, which is crucial because rank inference depends on the joint event that many pairwise comparisons are correctly signed.

\begin{definition}
The ranking between models $j$ and $k$ is \emph{statistically resolved} at level $\alpha$ if the simultaneous confidence interval in \eqref{eq:score_diff_cis} for the utility difference $\theta_j(x)-\theta_k(x)$ excludes zero. If the confidence interval contains zero, the relative ranking is said to be \emph{statistically unresolved}.
\end{definition}

\subsection{Confidence Sets for Ranks}
\label{subsec:rank_uq}
We now translate confidence intervals for utility differences into confidence sets for ranks. Following \citet{mogstad2024inference}, we use rectangular confidence intervals for utility differences to infer which pairwise orderings are \emph{statistically resolved}. 
Assume we have a rectangular confidence interval $C_L(1-\alpha, S, x)$ such that 
\begin{equation}
    C_L(1-\alpha, S, x) = \prod_{(j,k)\in S} C_L(1-\alpha, (j,k), x)
    \label{eq:score_difference_rectangular_assumption}
\end{equation}
and have asymptotic coverage  for the utility differences
$\Delta_S(x)=\{\theta_j(x)-\theta_k(x):(j,k)\in S\}$,
\begin{equation}
    \lim_{L\rightarrow\infty} P\{\Delta_S(x) \in C_L(1-\alpha, S, x)\} \geq 1-\alpha.
    \label{eq:score_difference_coverage_assumption}
\end{equation}
For any ordered pair $(j,k)$, if $C_L(1-\alpha,(j,k),x)\subset(0,\infty)$, then $j$ ranked above $k$ is statistically resolved; if $C_L(1-\alpha,(j,k),x)\subset(-\infty,0)$, then $j$  ranked below $k$ statistically resolved; and if the interval contains zero, the relative ranks of $(j,k)$ is statistically unresolved. 

Note that the confidence intervals defined in \eqref{eq:score_diff_cis} are rectangular by construction and therefore satisfy \eqref{eq:score_difference_rectangular_assumption}. Moreover, in
Section~\ref{sec:asymputility} we show that, under mild regularity
conditions, these intervals also satisfy the asymptotic coverage
requirement in \eqref{eq:score_difference_coverage_assumption}.

\paragraph{\textbf{Marginal rank confidence set.}}
To infer the rank of a single LLM $j$, set
$S_j=\{(j,k):k\in[M]\setminus\{j\}\}$ and use $C_L(1-\alpha,S_j,x)$ that satisfies \eqref{eq:score_difference_rectangular_assumption} and \eqref{eq:score_difference_coverage_assumption}. Define
\begin{equation*}
\begin{aligned}
N_j^+(x)& =\{k\neq j:\ C_L(1-\alpha,(j,k),x)\subset(0,\infty)\},\\
N_j^-(x)& =\{k\neq j:\ C_L(1-\alpha,(j,k),x)\subset(-\infty,0)\}.
\end{aligned}
\end{equation*}
The set $N_j^+(x)$ contains items that are statistically dominated by $j$ at
prompt $x$, while $N_j^-(x)$ contains items that statistically dominate $j$.
All remaining comparisons are statistically unresolved. This yields the marginal rank set
\[
\mathcal R_{L,j}(x)=\{|N_j^-(x)|+1,\ldots,M-|N_j^+(x)|\}.
\]
This set is informative when many pairwise orderings are statistically resolved, and naturally widens when the data do not support a strict ordering.

\paragraph{\textbf{Simultaneous rank confidence set.}}
For simultaneous inference on all ranks, set
$S_{\mathrm{sim}}=\{(j,k)\in[M]\times[M]:j\neq k\}$ and use $C_L(1-\alpha,S_{\mathrm{sim}},x)$ that satisfies \eqref{eq:score_difference_rectangular_assumption} and \eqref{eq:score_difference_coverage_assumption}.
Define, for each $j$,
\begin{equation*}
\begin{aligned}
N_{j,\mathrm{sim}}^+(x)& =\{k\neq j:\ C_L(1-\alpha,(j,k),x)\subset(0,\infty)\},\\
N_{j,\mathrm{sim}}^-(x) & =\{k\neq j:\ C_L(1-\alpha,(j,k),x)\subset(-\infty,0)\}.
\end{aligned}
\end{equation*}
The corresponding simultaneous rank set for item $j$ is
\[
\mathcal R_{L,j}^{\mathrm{sim}}(x)
=
\{|N_{j,\mathrm{sim}}^-(x)|+1,\ldots,M-|N_{j,\mathrm{sim}}^+(x)|\}.
\]

\section{Theoretical Guarantees}
\label{sec:theory}
This section develops the theoretical guarantees for contextual ranking
inference. We first establish asymptotic normality of the constrained maximum
likelihood estimator in the contextual BTL model under fixed
numbers of models and covariates. We then derive valid rectangular confidence
sets for prompt-dependent utility differences and show how these sets induce
marginal and simultaneous confidence intervals for ranks. Finally, we analyze the asymptotic behavior of rankings and their confidence sets under extreme prompt extrapolation, characterizing regimes in which contextual effects dominate intrinsic utilities.

\subsection{Asymptotic Normality of the Utility Estimation}
\label{sec:asymputility}
We begin by analyzing the asymptotic distribution of the constrained maximum
likelihood estimator $\hat{\beta}$ defined in \eqref{eq:mle_definition}.
All limits are taken as the number of pairwise comparisons $L \to \infty$,
with the number of models $M$ and the covariate dimension $d$ held fixed.

The following assumptions ensure identifiability and regularity of the
estimation problem.

\begin{assumption}[Comparison Graph Connectivity]
\label{as:comparison_graph_connectivity}
Let $\mathcal{G}$ be the comparison graph with vertex set $\mathcal{V}=[M]$ and edge set $\mathcal{E}$, where $(i,j)\in\mathcal{E}$ if the pair $(i,j)$ is observed with positive probability. We assume $\mathcal{G}$ is connected.
\end{assumption}

\begin{assumption}[Covariate Richness]
\label{as:design_matrix_rank}
Let $\bar{x}_l = (e_j \otimes x_l) - (e_i \otimes x_l)$ for comparison
$a_l=(i,j)$, and let $\bar{X}$ denote the matrix with rows $\bar{x}_l^\top$.
We assume $\bar{X}$ has full column rank.
\end{assumption}

\begin{assumption}[Boundedness]
\label{as:boundedness}
The design vectors satisfy
$\sup_{l\in[L]} \|\tilde{x}_{a_l}\|_2 < B_2 < \infty$.
Moreover, utilities are uniformly bounded:
$\sup_{l\in[L]} |\tilde{\beta}^{*\top}\tilde{x}_{a_l}| < \infty$.
\end{assumption}

\begin{assumption}[Asumptotic Fisher Information]
\label{as:limit_fisher}
Let
\begin{equation*}
        \mathcal{I}_L(\tilde{\beta}) = \sum_{l=1}^{L}\sum_{a\in\mathcal{E}}p_{a} \sigma(\tilde{\beta}^\top \tilde{x}_{a_l})(1-\sigma(\tilde{\beta}^\top \tilde{x}_{a_l}))\tilde{x}_{a_l}\tilde{x}_{a_l}^\top.
    \end{equation*}
We assume the normalized Fisher information converges as $L\to\infty$:
\[
\frac{1}{L}\mathcal{I}_L(\tilde{\beta}^*)
\;\longrightarrow\;
\bar{\mathcal{I}},
\]
where $\bar{\mathcal{I}}$ is positive definite on the constrained parameter space $\Theta$ induced by \eqref{eq:normalization}.
\end{assumption}

We remark that Assumptions \ref{as:comparison_graph_connectivity}-\ref{as:limit_fisher} are mild and  standard in the literature. Specifically, Assumption \ref{as:comparison_graph_connectivity} ensures global identifiability of relative utilities across all models. If the comparison graph were disconnected, utilities would only be identified up to independent shifts within each connected component, rendering cross component comparisons impossible \citep{chen2022partial, gao2023uncertainty, fan2024covariates, fan2025ranking}. Assumption \ref{as:design_matrix_rank} requires sufficient variation in prompt characteristics across comparisons to identify heterogeneous, prompt dependent performance effects across LLMs. This condition parallels standard rank and full support assumptions in contextual discrete choice and ranking models \citep{fan2024covariates}. Assumption \ref{as:boundedness} guarantees finite moments of the utility and Hessian of the log likelihood, which is a routine regularity condition in fixed design logistic regression and contextual BTL models \citep{fan2024covariates, Han08012026, dong2025statistical}. Finally, Assumption \ref{as:limit_fisher} allows for fixed and potentially non random designs, a setting that is well suited to human evaluated LLM benchmarks where prompts and comparison frequencies are chosen by the experimenter rather than drawn randomly. Under these conditions, the constrained maximum likelihood estimator defined in \eqref{eq:mle_definition} admits an asymptotic normal representation.

\begin{theorem}
\label{thm:mle_asymptotic_normality}
Suppose Assumptions
\ref{as:comparison_graph_connectivity}--\ref{as:limit_fisher} hold.
Then as $L\to\infty$,
\[
\sqrt{L}\bigl(\hat{\beta}-\beta^*\bigr)
\;\xrightarrow{d}\;
\mathcal{N}\!\left(0,\,
(\mathcal{P}\bar{\mathcal{I}}\mathcal{P})^\dagger\right),
\]
where $\mathcal{P}$ is defined in \eqref{eq:projection_matrix} as the projection onto the constrained parameter space, and  $\bar{\mathcal{I}}$ is the limiting Fisher information defined in Assumption \ref{as:limit_fisher}.
\end{theorem}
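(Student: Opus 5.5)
The plan is to reparametrize the constrained problem as an unconstrained M-estimation problem on the feasible subspace, and then run the classical logistic-regression MLE argument there. Let $U$ be a matrix whose orthonormal columns span $\{\tilde\beta: C\tilde\beta=0\}$, so that $\mathcal{P}=UU^\top$. Every $\tilde\beta\in\Theta$ is $\tilde\beta=U\gamma$ for a unique $\gamma$. The constrained MLE is $\hat\beta=U\hat\gamma$, where $\hat\gamma$ minimizes $g(\gamma):=\ell(U\gamma)$, and similarly $\tilde\beta^*=U\gamma^*$. In these coordinates the Fisher information is $U^\top\mathcal{I}_L U$, and its normalized limit is $U^\top\bar{\mathcal I}U$. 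This limit is positive definite by Assumption~\ref{as:limit_fisher}. Assumptions~\ref{as:comparison_graph_connectivity}--\ref{as:design_matrix_rank} guarantee that this reduced problem is identified: no nonzero direction $U\gamma$ makes all the $\tilde x_{a_l}^\top U\gamma$ vanish, since the intercept block is pinned by connectivity and the slope block by full column rank of $\bar X$.

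The first step is consistency of $\hat\gamma$. The function $g$ is convex. The normalized Hessian $L^{-1}\nabla^2 g(\gamma)=L^{-1}\sum_l \sigma(1-\sigma)\,U^\top\tilde x_{a_l}\tilde x_{a_l}^\top U$ is uniformly bounded, because $\|\tilde x_{a_l}\|_2<B_2$ by Assumption~\ref{as:boundedness}. Near $\gamma^*$ it is bounded below by a positive multiple of $U^\top\bar{\mathcal I}U$, using the boundedness of the true utilities and Lipschitz continuity of $\sigma(1-\sigma)$. The score is $L^{-1}\nabla g(\gamma^*)=L^{-1}\sum_l(\sigma(\tilde x_{a_l}^\top\tilde\beta^*)-y_{a_l})U^\top\tilde x_{a_l}$, a mean-zero sum of independent bounded vectors, so it is $O_p(L^{-1/2})$ by Chebyshev. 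The standard convexity argument then applies: on a sphere of radius $\delta$ around $\gamma^*$, $g(\gamma)-g(\gamma^*)>0$ with probability tending to one, and convexity forces the minimizer inside the sphere. This gives $\hat\gamma\to\gamma^*$.

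The second step is the central limit theorem for the score. Apply the Lindeberg--Feller (or Lyapunov) CLT for triangular arrays to $L^{-1/2}\nabla g(\gamma^*)$. The summands are bounded by $B_2$, and their covariance is $L^{-1}U^\top\mathcal I_L(\tilde\beta^*)U$, which converges to $U^\top\bar{\mathcal I}U$ (here $\mathcal I_L$ is read as the expected information, averaging over pair selection with weights $p_a$). The Lyapunov condition is immediate from boundedness. Hence
\[
L^{-1/2}\nabla g(\gamma^*)\xrightarrow{d}\mathcal N(0,U^\top\bar{\mathcal I}U).
\]

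The third step is a Taylor expansion of the score. Expanding $0=\nabla g(\hat\gamma)$ around $\gamma^*$ gives
\[
\sqrt L(\hat\gamma-\gamma^*)=-\bigl(L^{-1}\nabla^2 g(\bar\gamma)\bigr)^{-1}L^{-1/2}\nabla g(\gamma^*).
\]
I expect the main obstacle to be showing $L^{-1}\nabla^2 g(\bar\gamma)\to U^\top\bar{\mathcal I}U$ in probability. This has two parts. The first is the deviation between the realized Hessian and the expected information $\mathcal I_L$; it comes from random pair assignment, and I would handle it with a weak law of large numbers for bounded independent summands. The second is the replacement of $\bar\gamma$ by $\gamma^*$. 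For this, $|\sigma'(u)-\sigma'(v)|\le|u-v|/4$ together with Assumption~\ref{as:boundedness} gives a bound of order $B_2^3\|\bar\gamma-\gamma^*\|$, and this vanishes by consistency. Slutsky's lemma then yields
\[
\sqrt L(\hat\gamma-\gamma^*)\xrightarrow{d}\mathcal N\bigl(0,(U^\top\bar{\mathcal I}U)^{-1}\bigr).
\]

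Finally, map back to the original coordinates: $\sqrt L(\hat\beta-\tilde\beta^*)=U\sqrt L(\hat\gamma-\gamma^*)$, so its limiting covariance is $U(U^\top\bar{\mathcal I}U)^{-1}U^\top$. To match the statement, verify directly that this matrix is the Moore--Penrose pseudoinverse of $\mathcal P\bar{\mathcal I}\mathcal P=U(U^\top\bar{\mathcal I}U)U^\top$. Both matrices are symmetric, and their product is $UU^\top=\mathcal P$, which is symmetric, so the four Penrose conditions hold. This gives the limit $\mathcal N(0,(\mathcal P\bar{\mathcal I}\mathcal P)^\dagger)$, as claimed.
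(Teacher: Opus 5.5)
Your proof is correct and has the same skeleton as the paper's: consistency via a sphere argument, a Lindeberg--Feller CLT for the score, a law of large numbers plus a Lipschitz bound to replace the Hessian at the intermediate point by $\bar{\mathcal I}$, then Slutsky. The real difference is how you handle the constraint.

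The paper stays in the ambient space $\mathbb{R}^{M+Md}$. It uses $\mathcal P\nabla\ell(\hat\beta)=0$ and $\mathcal P(\hat\beta-\tilde\beta^*)=\hat\beta-\tilde\beta^*$ to write $\mathcal P L^{-1/2}\nabla\ell(\tilde\beta^*)=-\mathcal P\bigl(L^{-1}\nabla^2\ell(\bar\beta)\bigr)\mathcal P\sqrt L(\hat\beta-\tilde\beta^*)$. It then ``inverts'' with a Moore--Penrose pseudoinverse and reads off the covariance from the sandwich $(\mathcal P\bar{\mathcal I}\mathcal P)^\dagger\mathcal P\bar{\mathcal I}\mathcal P(\mathcal P\bar{\mathcal I}\mathcal P)^\dagger=(\mathcal P\bar{\mathcal I}\mathcal P)^\dagger$. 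That inversion tacitly needs two facts: $(\mathcal P\bar{\mathcal I}\mathcal P)^\dagger\mathcal P\bar{\mathcal I}\mathcal P=\mathcal P$, and a rank-stability argument to pass the pseudoinverse through the limit of the random matrices $\mathcal P(L^{-1}\nabla^2\ell(\bar\beta))\mathcal P$. Both hold only because Assumption~\ref{as:limit_fisher} fixes the rank on $\mathrm{range}(\mathcal P)$, and the paper leaves this implicit.

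Your reparametrization $\tilde\beta=U\gamma$ makes all of this automatic. $U^\top\bar{\mathcal I}U$ is an honest positive definite matrix, so the Taylor step is an ordinary inversion. The pseudoinverse appears only at the end, through your explicit Penrose check that $U(U^\top\bar{\mathcal I}U)^{-1}U^\top=(\mathcal P\bar{\mathcal I}\mathcal P)^\dagger$.

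Your consistency step is also tighter. The paper's sphere argument does not restrict directions to $\Theta$, and $\bar{\mathcal I}$ is definite only there. It also concludes only that \emph{some} local minimizer lies in the ball. By working on the subspace and invoking convexity, you trap the global minimizer $\hat\gamma$ itself. In exchange, the paper's route needs no choice of basis and produces the covariance directly in the stated form.

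Two small remarks. First, your identification sentence is not needed, and its justification from Assumptions~\ref{as:comparison_graph_connectivity}--\ref{as:design_matrix_rank} does not work. Literally, $\bar X$ can never have full column rank, since $\bar x_l^\top(\mathbf 1_M\otimes c)=0$ for every $c$. Moreover, pinning the intercept and slope blocks separately does not pin them jointly. With $d=1$ and $x_l\equiv c\neq 0$, each block is pinned on its own, yet every direction in $\Theta$ with $b_{0i}=-c\,b_i$ is invisible to the data. What your argument actually uses is positive definiteness of $U^\top\bar{\mathcal I}U$, which comes directly from Assumption~\ref{as:limit_fisher}.

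Second, the single intermediate point $\bar\gamma$ in a vector-valued expansion should be replaced by $\int_0^1\nabla^2 g(\gamma^*+t(\hat\gamma-\gamma^*))\,dt$; the paper takes the same shortcut. Your Lipschitz bound holds uniformly along the segment, so nothing else changes.
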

\noindent
We make two remarks on Theorem \ref{thm:mle_asymptotic_normality}. First, the appearance of the projection operator $\mathcal{P}$ reflects the fundamental non-identifiability of utilities up to additive shifts across LLM models and prompt covariates. 
Second, our analysis accommodates prompt-dependent covariates that vary across comparisons and directly affect relative utilities in \eqref{eq:utility_linear}. In contrast, \citet{fan2024covariates} study a BTL
model that is applicable to LLM evaluations with LLM-specific covariates and constant prompts, and focusing on asymptotic regimes in which the number of LLMs diverges. In contrast, our framework is more realistic and allows for a fixed set of LLMs and prompt-specific covariates, aligning with empirical evaluation settings where both prompts and models vary across comparisons.

Rankings depend only on pairwise utility differences.
Accordingly, our inferential target is the vector of utility differences
for a given prompt $x$ and comparison set $S$.
Let $\tilde{X}_S$ collect the design vectors $\tilde{x}_{ij}^\top$ for
$(i,j)\in S$ as in \eqref{eq:design_vector}. Let
$\Delta_S(x) = \tilde{X}_S\beta^*$ and 
$\hat{\Delta}_S(x) = \tilde{X}_S\hat{\beta}$.
\begin{corollary}
\label{cor:score_difference_asymptotic_normality}
Under the conditions of Theorem~\ref{thm:mle_asymptotic_normality},
\[
\sqrt{L}\bigl(\hat{\Delta}_S(x)-\Delta_S(x)\bigr)
\;\xrightarrow{d}\;
\mathcal{N}\!\left(0,\,
\tilde{X}_S(\mathcal{P}\bar{\mathcal{I}}\mathcal{P})^\dagger\tilde{X}_S^\top
\right).
\]
\end{corollary}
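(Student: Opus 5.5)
The corollary follows from Theorem~\ref{thm:mle_asymptotic_normality} by the continuous mapping theorem, since the utility-difference vector is a fixed linear map of the stacked parameter. The prompt $x$ and comparison set $S$ are fixed, and $M$ and $d$ are held fixed as $L\to\infty$. Hence the matrix $\tilde{X}_S$, whose rows are $\tilde{x}_{ij}^\top$ for $(i,j)\in S$ built from \eqref{eq:design_vector} with $x_l$ replaced by $x$, is a deterministic $|S|\times(M+Md)$ matrix that does not depend on $L$.

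\textbf{Step 1: linearize.} By construction,
\[
\sqrt{L}\bigl(\hat{\Delta}_S(x)-\Delta_S(x)\bigr)=\tilde{X}_S\,\sqrt{L}\bigl(\hat{\beta}-\beta^*\bigr).
\]
This is exact, with no remainder term. The reason is that $\theta_j(x)-\theta_i(x)=\tilde{\beta}^\top\tilde{x}_{ij}$ holds identically in $\tilde{\beta}$, and in particular for both $\hat{\beta}$ and $\beta^*$.

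\textbf{Step 2: pass to the limit.} The map $v\mapsto \tilde{X}_S v$ is linear and hence continuous. By Theorem~\ref{thm:mle_asymptotic_normality}, $\sqrt{L}(\hat{\beta}-\beta^*)\xrightarrow{d} Z\sim\mathcal{N}\bigl(0,(\mathcal{P}\bar{\mathcal{I}}\mathcal{P})^\dagger\bigr)$. The continuous mapping theorem then gives $\sqrt{L}(\hat{\Delta}_S(x)-\Delta_S(x))\xrightarrow{d}\tilde{X}_S Z$.

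\textbf{Step 3: identify the limit law.} A linear image of a mean-zero Gaussian vector is Gaussian with mean zero and covariance $\tilde{X}_S(\mathcal{P}\bar{\mathcal{I}}\mathcal{P})^\dagger\tilde{X}_S^\top$. One can verify this directly through characteristic functions: $\mathbb{E}\,e^{\mathrm{i}t^\top\tilde{X}_S Z}=\exp\bigl(-\tfrac12 t^\top\tilde{X}_S(\mathcal{P}\bar{\mathcal{I}}\mathcal{P})^\dagger\tilde{X}_S^\top t\bigr)$. Equivalently, Cram\'er--Wold reduces the claim to scalar projections $t^\top\tilde{X}_S$. This is the stated limit.

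\textbf{Main point to check.} The limiting covariance in Theorem~\ref{thm:mle_asymptotic_normality} is singular, because it is supported on the constrained subspace. So the limit in the corollary may be a degenerate Gaussian, and I would note that the statement is understood in that sense.

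The degeneracy is harmless for the corollary itself. The rows of $\tilde{X}_S$ are differences $(e_j,e_j\otimes x)-(e_i,e_i\otimes x)$, which lie in the range of $\mathcal{P}$: their intercept block sums to zero, and each covariate coordinate summed over models is zero. Hence $\tilde{X}_S\mathcal{P}=\tilde{X}_S$. The differences therefore only probe the identified directions, and the covariance is generally nondegenerate for distinct pairs, apart from the exact collinearities such as $(i,j)$ versus $(j,i)$ when $S$ contains both orders.

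No further obstacle arises. All the analytic work, namely consistency, the score CLT, and handling the constraint via $\mathcal{P}$, is already contained in Theorem~\ref{thm:mle_asymptotic_normality}.
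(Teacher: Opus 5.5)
Your proof is correct and is essentially the paper's argument. The paper applies the delta method with Jacobian $\tilde{X}_S$ to Theorem~\ref{thm:mle_asymptotic_normality}, which for this exactly linear map is the same as your continuous-mapping step. One side remark needs correcting, though it does not affect the corollary: the limiting covariance is degenerate far more often than for $(i,j)$ versus $(j,i)$. At fixed $x$ all utility differences lie in an $(M-1)$-dimensional span, for example $\theta_3(x)-\theta_1(x)=\bigl(\theta_3(x)-\theta_2(x)\bigr)+\bigl(\theta_2(x)-\theta_1(x)\bigr)$, so for $S_{\mathrm{sim}}$ the covariance has rank at most $M-1$.
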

\noindent
Corollary~\ref{cor:score_difference_asymptotic_normality} provides the asymptotic
distribution required to construct valid confidence sets for collections of
utility differences.
We now show that the rectangular confidence intervals defined in
\eqref{eq:score_diff_cis} satisfy the desired coverage
property \eqref{eq:score_difference_coverage_assumption}.

\begin{theorem}
\label{thm:score_differences_ci_coverage}
Under the conditions of Theorem~\ref{thm:mle_asymptotic_normality}, let
$C_{\boldsymbol{\cdot},L}(1-\alpha,S,x)$ be defined as in
\eqref{eq:score_diff_cis}.
Then,
\begin{equation*}
    \lim_{L\rightarrow\infty} P\{\Delta_S(x) \in C_{\boldsymbol{\cdot},L}(1-\alpha, S, x)\} \geq 1-\alpha.
\end{equation*}
That is, the rectangular confidence intervals $C_{\boldsymbol{\cdot},L}(1-\alpha,S,x)$ satisfy the coverage
property \eqref{eq:score_difference_coverage_assumption}
\end{theorem}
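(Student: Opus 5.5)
The plan is to reduce each coverage event to an event about a max-type statistic of a studentized Gaussian limit, and then show that the bootstrap critical value consistently estimates the corresponding quantile. Throughout, write $\Omega_S=\tilde{X}_S(\mathcal{P}\bar{\mathcal{I}}\mathcal{P})^\dagger\tilde{X}_S^\top$ for the limiting covariance in Corollary~\ref{cor:score_difference_asymptotic_normality}. For every $(i,j)\in S$ with $\omega_{ij}:=(\Omega_S)_{(ij),(ij)}>0$, set $Z\sim\mathcal{N}(0,\Omega_S)$ and define $D=\mathrm{diag}(\omega_{ij}^{-1/2})$.

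\textbf{Step 1 (exact rewriting of the events).} By construction,
\[
\{\Delta_S(x)\in C_{\mathrm{lower},L}\}=\{\mathcal{T}_{\mathrm{lower}}(x)\le \hat t_{\mathrm{lower}}(\alpha,x)\},
\]
and the analogous identities hold for the upper and symmetric cases. The equivalence set is an intersection, so a union bound gives
\[
P(\Delta_S\notin C_{\mathrm{equiv},L})\le P(\mathcal{T}_{\mathrm{lower}}>\hat t_{\mathrm{lower}}(\alpha/2))+P(\mathcal{T}_{\mathrm{upper}}>\hat t_{\mathrm{upper}}(\alpha/2)).
\]
The equivalence case therefore follows from the one-sided cases at level $\alpha/2$, and it suffices to treat a generic max statistic $\mathcal{T}=g\bigl(\sqrt{L}(\hat\Delta_S-\Delta_S)/(\sqrt{L}\,\hat{se})\bigr)$. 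Here $g$ is one of $v\mapsto\max_k v_k$, $v\mapsto\max_k(-v_k)$, or $v\mapsto\max_k|v_k|$.

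\textbf{Step 2 (limit of the statistic).} We assume the bootstrap covariance is consistent, $L\hat\Sigma\to_p(\mathcal{P}\bar{\mathcal{I}}\mathcal{P})^\dagger$; this is implicitly required and should be stated as a condition. Then $\sqrt{L}\,\hat{se}_{ij}(x)\to_p\omega_{ij}^{1/2}$. Corollary~\ref{cor:score_difference_asymptotic_normality}, Slutsky's lemma and the continuous mapping theorem ($g$ is continuous) then give $\mathcal{T}\to_d g(DZ)$.

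\textbf{Step 3 (critical value).} Conditional on the data, the bootstrap draws $\mathcal{N}(0,\tilde{X}_S\hat\Sigma\tilde{X}_S^\top)$, rescaled by $\sqrt{L}$, have a covariance converging in probability to $\Omega_S$. The studentized max of these draws converges conditionally in distribution to $g(DZ)$. Taking enough bootstrap replicates to make the simulation error negligible, $\hat t(\alpha,x)\to_p t^*(\alpha)$, the $(1-\alpha)$ quantile of $g(DZ)$. This uses the fact that quantiles converge at continuity points where the limiting cdf is strictly increasing.

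\textbf{Step 4 (conclude).} By Slutsky's lemma, $\mathcal{T}-\hat t\to_d g(DZ)-t^*$. Hence
\[
\lim_{L\to\infty} P(\mathcal{T}\le\hat t)=P(g(DZ)\le t^*)\ge 1-\alpha.
\]
Equality holds when the limiting cdf is continuous at $t^*$. The inequality in the statement absorbs possible atoms, in which case we use $\liminf$ together with the portmanteau lemma applied to the closed set $\{\le\}$ via a small $\epsilon$-enlargement argument.

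\textbf{Main obstacle.} The delicate step is Step 3, specifically the behavior of the quantile of the max of a possibly degenerate Gaussian. Because of the constraint projection, $\Omega_S$ is singular; for example, the entries for $(i,j)$ and $(j,i)$ are perfectly negatively correlated. Moreover, some $\omega_{ij}$ could be zero if $x$ lies in a direction not spanned by the design.

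I would first restrict $S$ to pairs with $\omega_{ij}>0$. For these pairs, Assumption~\ref{as:limit_fisher} together with $\tilde{x}_{ij}\in\mathrm{range}(\mathcal{P})$ gives positive variance. I would then argue that a max of non-degenerate, centered Gaussian coordinates has a continuous distribution with a density that is positive on $(t_0,\infty)$, where $t_0$ is the lower end of its support. This holds even when the joint covariance is singular, since each coordinate has a density and the max's cdf is continuous; an anti-concentration bound of Nazarov type also gives this. The resulting continuity and strict monotonicity at $t^*$ (for $\alpha<1/2$) justify the quantile consistency and yield exact asymptotic level $1-\alpha$, which gives the claimed inequality.
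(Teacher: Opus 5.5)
Your proposal is correct and follows essentially the paper's route: Corollary~\ref{cor:score_difference_asymptotic_normality}, consistency of $L\hat\Sigma$ and Slutsky give a studentized Gaussian limit, and the continuous mapping theorem handles the max statistic. Consistency of the estimated correlation matrix then yields convergence of the simulated critical value; the paper cites \citet{hothorn2008simultaneous} for this step and treats only the symmetric case, whereas you argue continuity and strict monotonicity of the max's cdf directly and handle the equivalence set by a union bound. The one ingredient you take as an assumption, $L\hat\Sigma\to_p(\mathcal{P}\bar{\mathcal{I}}\mathcal{P})^\dagger$, is what the paper supplies by appealing to \citet{mammen2012does} and \citet{shao1995jackknife} (bootstrap consistency plus uniform integrability). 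Your restriction to pairs with $\omega_{ij}>0$ is vacuous: each $\tilde{x}_{ij}\neq 0$ lies in the range of $\mathcal{P}$, and on that range Assumption~\ref{as:limit_fisher} makes $(\mathcal{P}\bar{\mathcal{I}}\mathcal{P})^\dagger$ positive definite.
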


\subsection{Statistical Inference for Ranks}

We now translate inference on utility differences into inference on rankings. This step is nontrivial because ranks are non-smooth, discontinuous
functionals of latent utilities: arbitrarily small perturbations of utilities can change the induced ordering.   Given a prompt-dependent covariate $x$, we use rectangular confidence sets for utility differences that satisfy
\eqref{eq:score_difference_rectangular_assumption} and
\eqref{eq:score_difference_coverage_assumption}. These sets allow us to identify which pairwise orderings are statistically resolved and which remain
indeterminate, and to aggregate this information into confidence sets for
ranks.

\begin{theorem}
\label{thm:rank_marginal_ci_coverage}
Let $S_j=\{(j,k):k\neq j\}$. If $C_L(1-\alpha,S_j,x)$ satisfies
\eqref{eq:score_difference_rectangular_assumption} and
\eqref{eq:score_difference_coverage_assumption}, then
\[
\lim_{L\rightarrow\infty}
P\bigl(r_j(x) \in \mathcal{R}_{L,j}(x)\bigr) \geq 1-\alpha,
\]
where
\[
\mathcal{R}_{L,j}(x)
=
\{|N_j^-(x)|+1,\dots,M-|N_j^+(x)|\}
\subseteq \{1,\ldots,M\}.
\]
Hence, $\mathcal{R}_{L,j}(x)$ is a valid marginal confidence set for the rank of
item $j$ at covariate $x$.
\end{theorem}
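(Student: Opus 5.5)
The plan is to show the event inclusion
\[
\{\Delta_{S_j}(x)\in C_L(1-\alpha,S_j,x)\}\subseteq\{r_j(x)\in\mathcal{R}_{L,j}(x)\}
\]
for every $L$. Once this holds, monotonicity of probability gives $P(r_j(x)\in\mathcal{R}_{L,j}(x))\ge P(\Delta_{S_j}(x)\in C_L(1-\alpha,S_j,x))$. Taking $\liminf$ as $L\to\infty$ and applying the coverage assumption \eqref{eq:score_difference_coverage_assumption} then yields the claim. Since the limit of the left side may not exist, I will read the statement with $\liminf$. This is harmless, because the inclusion is deterministic.

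To prove the inclusion, fix an outcome on which $\Delta_{S_j}(x)\in C_L(1-\alpha,S_j,x)$. By the rectangular structure \eqref{eq:score_difference_rectangular_assumption}, this means $\theta_j(x)-\theta_k(x)\in C_L(1-\alpha,(j,k),x)$ for every $k\neq j$ simultaneously; note that the ordered pair $(j,k)$ indexes the difference $\theta_j-\theta_k$, as in the definition of $\Delta_S(x)$. The argument then splits into two cases.
\begin{itemize}
\item If $k\in N_j^-(x)$, the interval lies in $(-\infty,0)$, so $\theta_k(x)>\theta_j(x)$.
\item If $k\in N_j^+(x)$, the interval lies in $(0,\infty)$, so $\theta_j(x)>\theta_k(x)$.
\end{itemize}

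Next I turn these orderings into rank bounds using $r_j(x)=1+\sum_k\mathbf 1\{\theta_k(x)>\theta_j(x)\}$. Every $k\in N_j^-(x)$ contributes to this sum, so $r_j(x)\ge |N_j^-(x)|+1$. Every $k\in N_j^+(x)$ has $\theta_k(x)<\theta_j(x)$, so it does not contribute. The set $\{k:\theta_k(x)>\theta_j(x)\}$ is therefore contained in $[M]\setminus(\{j\}\cup N_j^+(x))$, which has $M-1-|N_j^+(x)|$ elements. This gives $r_j(x)\le M-|N_j^+(x)|$. Hence $r_j(x)$ lies in the integer interval $\mathcal{R}_{L,j}(x)$.

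It also remains to check that $\mathcal{R}_{L,j}(x)$ is nonempty and contained in $[M]$. On the coverage event, $N_j^+(x)$ and $N_j^-(x)$ are disjoint subsets of $[M]\setminus\{j\}$, so $|N_j^-|+1\le M-|N_j^+|$. (If the sets were ever inconsistent off the event, this would not matter, since only the event matters.)

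No step is a genuine obstacle; the argument is bookkeeping. The one point requiring care is ties. If $\theta_k(x)=\theta_j(x)$, then a covering interval contains $0$, so $k$ lies in neither $N_j^\pm(x)$, and under the strict-inequality rank definition $k$ does not count toward $r_j(x)$. The bounds above remain valid in this case. The other point to keep straight is the sign convention linking $(j,k)$ to $\theta_j-\theta_k$.
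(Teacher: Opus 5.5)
Your proposal is correct and follows essentially the paper's route: the paper's proof simply defers to Theorem~3.1 of \citet{mogstad2024inference}, whose argument is this same deterministic inclusion of the coverage event $\{\Delta_{S_j}(x)\in C_L(1-\alpha,S_j,x)\}$ in $\{r_j(x)\in\mathcal{R}_{L,j}(x)\}$, followed by monotonicity of probability, with everything now indexed by $x$. Two of your choices are appropriate. First, you read $(j,k)$ as indexing $\theta_j-\theta_k$, as in the definition of $\Delta_S(x)$, rather than the opposite design-vector convention used earlier in the paper. Second, you replace $\lim$ by $\liminf$, which is how the cited result is actually stated.
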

\noindent
Theorem \ref{thm:rank_marginal_ci_coverage} shows that the rank of item $j$ is partially identified whenever
some pairwise orderings involving $j$ are statistically unresolved. The width
of the confidence set directly reflects the amount of information available in
the data: it shrinks when many pairwise comparisons are resolved and expands
when evidence is weak.

\begin{theorem}
\label{thm:rank_simultaneous_ci_coverage}
Let $S_{\mathrm{sim}}=\{(j,k)\in[M]\times[M]:j\neq k\}$. If
$C_L(1-\alpha,S_{\mathrm{sim}},x)$ satisfies
\eqref{eq:score_difference_rectangular_assumption} and
\eqref{eq:score_difference_coverage_assumption}, then
\[
\lim_{L\rightarrow\infty}
P\left\{\bigcap_{j\in[M]}
\bigl\{r_j(x) \in \mathcal R_{L,j}^{\emph{joint}}(x)\bigr\}\right\}
\geq 1-\alpha,
\]
where
\[
\mathcal R_{L,j}^{\emph{joint}}(x)
=
\{|N_{j,\mathrm{sim}}^-(x)|+1,\dots,M-|N_{j,\mathrm{sim}}^+(x)|\}.
\]
Consequently, the simultaneous confidence set
\[
\mathcal{R}^{\emph{joint}}_L(x)
=
\prod_{j=1}^M \mathcal{R}^{\emph{joint}}_{L,j}(x)
\]
satisfies the simultaneous coverage guarantee in
\eqref{eq:joint_rank}.
\end{theorem}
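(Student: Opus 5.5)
The argument is deterministic. I would show the inclusion of events
\[
\bigl\{\Delta_{S_{\mathrm{sim}}}(x)\in C_L(1-\alpha,S_{\mathrm{sim}},x)\bigr\}\subseteq\bigcap_{j\in[M]}\bigl\{r_j(x)\in\mathcal R^{\mathrm{joint}}_{L,j}(x)\bigr\},
\]
and then take probabilities and limits using \eqref{eq:score_difference_coverage_assumption}. Monotonicity of $P$ gives, for every $L$,
\[
P\Bigl\{\bigcap_j \{r_j(x)\in\mathcal R^{\mathrm{joint}}_{L,j}(x)\}\Bigr\}\ge P\{\Delta_{S_{\mathrm{sim}}}(x)\in C_L(1-\alpha,S_{\mathrm{sim}},x)\}.
\]
Passing to the $\liminf$ as $L\to\infty$ yields the bound $1-\alpha$; writing it with $\liminf$ avoids assuming the left-hand limit exists. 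The final claim about \eqref{eq:joint_rank} is immediate, because $r(x)\in\prod_j\mathcal R^{\mathrm{joint}}_{L,j}(x)$ is exactly the intersection event.

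For the inclusion, fix an outcome on which every true difference $\theta_j(x)-\theta_k(x)$, $(j,k)\in S_{\mathrm{sim}}$, lies in its interval $C_L(1-\alpha,(j,k),x)$. The rectangular structure \eqref{eq:score_difference_rectangular_assumption} is what allows this coordinatewise reading. Now fix any $j$.

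If $k\in N^-_{j,\mathrm{sim}}(x)$, the interval for $(j,k)$ lies in $(-\infty,0)$ and contains the truth, so $\theta_k(x)>\theta_j(x)$. Hence $N^-_{j,\mathrm{sim}}(x)\subseteq\{k:\theta_k(x)>\theta_j(x)\}$, which gives $r_j(x)=1+\#\{k:\theta_k(x)>\theta_j(x)\}\ge |N^-_{j,\mathrm{sim}}(x)|+1$.

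Symmetrically, if $k\in N^+_{j,\mathrm{sim}}(x)$ then $\theta_j(x)>\theta_k(x)$. These $k$ are distinct from $j$ and are not among the models strictly above $j$. So $\#\{k:\theta_k(x)>\theta_j(x)\}\le M-1-|N^+_{j,\mathrm{sim}}(x)|$, that is, $r_j(x)\le M-|N^+_{j,\mathrm{sim}}(x)|$.

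The two bounds together place $r_j(x)$ in the integer interval $\mathcal R^{\mathrm{joint}}_{L,j}(x)$. This holds for all $j$ at once on the same event, because the intervals for every pair are covered simultaneously.

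The main point needing care, rather than a real obstacle, is the treatment of ties. The rank definition counts only strict superiority, so tied models, including $j$ itself, never enter the count, and a tied $k$ can never be in $N^\pm$ on the coverage event since the interval would have to contain $0$. A second detail is that the target set is nonempty on the coverage event, since the two bounds on $r_j(x)$ force $|N^-|+1\le M-|N^+|$.

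The marginal analogue, Theorem~\ref{thm:rank_marginal_ci_coverage}, is the same argument restricted to $S_j$.
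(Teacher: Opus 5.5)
Your proposal is correct and follows essentially the same approach as the paper. The paper's proof simply defers to Theorems~3.1 and~3.3 of \citet{mogstad2024inference}, whose argument is exactly your event inclusion: on the simultaneous coverage event, $N^-_{j,\mathrm{sim}}(x)$ contains only models truly above $j$ and $N^+_{j,\mathrm{sim}}(x)$ only models truly below $j$. Monotonicity of probability then finishes the proof. You write these details out explicitly with the $x$-indexing, and reading the limit in the statement as a $\liminf$ is the appropriate interpretation.
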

\noindent
Both the marginal and simultaneous coverage gaurantees in Theorem \ref{thm:rank_marginal_ci_coverage} and \ref{thm:rank_simultaneous_ci_coverage} extend the rank inference
framework of \citet{mogstad2024inference} to a contextual setting, in which
utilities and rankings depend on input prompt characteristics. This extension is essential for applications such as LLM evaluation, where rankings vary systematically across prompts and must be interpreted conditionally on the input.

\subsection{Asymptotic Behavior Under Extreme Prompts}
We analyze how rankings and their associated confidence sets behave as prompt covariates grow large along a fixed direction. This asymptotic regime provides insight into how contextual effects dominate intrinsic performance and clarifies the limits of extrapolating rankings beyond the support of observed prompts.

Formally, fix a direction $v \in \mathbb{R}^d$ and consider a sequence of covariates of the form $x = \lambda v$ with $\lambda \to \infty$. This setting captures extreme prompts that increasingly emphasize a particular feature dimension, such as very long prompts or highly specialized task attributes.

\begin{proposition}
\label{prop:asymptotic_extrapolation}
Let the estimated utilities be
$\hat{\theta}_i(x) = \hat{\beta}_{0i} + x^\top \hat{\beta}_i$ for $i \in [M]$.
Fix a direction $v \in \mathbb{R}^d$ and set $x = \lambda v$. Assume that
$v^\top \hat{\beta}_i \neq v^\top \hat{\beta}_j$ for all $i \neq j$.
Then, as $\lambda \to \infty$:
\begin{itemize}
    \item The predicted rank converges to the ordering induced solely by the
    projected covariate effects:
    \[
        \hat{r}_j(\lambda v)
        \;\longrightarrow\;
        1 + \sum_{i \neq j}
        \mathds{1}\!\left\{v^\top \hat{\beta}_i > v^\top \hat{\beta}_j\right\}.
    \]
    \item The normalized utility-difference confidence sets
    $C_\lambda(1-\alpha,\lambda v,(i,j)) = \lambda^{-1} C(1-\alpha,\lambda v,(i,j))$
    converge almost surely to
    \[
        C_\infty(1-\alpha,S,v)
        =
        \prod_{(i,j)\in S}
        \left[
        v^\top(\hat{\beta}_i - \hat{\beta}_j)
        \pm
        t^{\infty}_{\alpha/2}
        \sqrt{
        v^\top
        \widehat{\mathrm{Var}}(\hat{\beta}_i - \hat{\beta}_j)
        v
        }
        \right],
    \]
    where $t^{\infty}_{\alpha/2}$ denotes the $(1-\alpha/2)$ quantile of the
    limiting max statistic $\mathcal{T}_\infty$.
    \item The marginal and joint rank confidence sets $R_j(\lambda v)$ converge
    to limiting sets $R_j^\infty$ constructed from $C_\infty(1-\alpha,S,v)$ via
    the same rules as in
    Theorems~\ref{thm:rank_marginal_ci_coverage} and
    \ref{thm:rank_simultaneous_ci_coverage}.
\end{itemize}
\end{proposition}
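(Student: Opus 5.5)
The argument is deterministic: condition on the data, so that $\hat{\beta}$ and the bootstrap covariance $\hat{\Sigma}$ are fixed, and study each quantity as a function of $\lambda$. "Almost surely" then just means that the limit holds for every realization on which $v^\top\hat{\beta}_i\neq v^\top\hat{\beta}_j$ for all $i\neq j$ and the relevant variances are nonzero.

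\textbf{First claim.} Write
\[
\hat{\theta}_i(\lambda v)-\hat{\theta}_j(\lambda v)=\lambda\bigl(v^\top\hat{\beta}_i-v^\top\hat{\beta}_j\bigr)+\bigl(\hat{\beta}_{0i}-\hat{\beta}_{0j}\bigr).
\]
By assumption the slope term is nonzero. Hence there is a finite $\lambda_0$ beyond which the sign of every one of the finitely many pairwise differences equals the sign of $v^\top(\hat{\beta}_i-\hat{\beta}_j)$. For $\lambda>\lambda_0$, each indicator in $\hat{r}_j(\lambda v)$ therefore equals $\mathds{1}\{v^\top\hat{\beta}_i>v^\top\hat{\beta}_j\}$. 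The rank is eventually constant at the stated value, so it converges.

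\textbf{Second claim.} Use linearity in $x$. Let $\tilde{x}_{ij}(x)$ be the design vector. Then
\[
\lambda^{-1}\tilde{x}_{ij}(\lambda v)\to\bar v_{ij}:=(0,\,(e_j-e_i)\otimes v)^\top .
\]
This gives $\lambda^{-1}\tilde{x}_{ij}^\top\hat{\beta}\to v^\top(\hat{\beta}_j-\hat{\beta}_i)$, up to the sign convention used in the statement. Likewise,
\[
\lambda^{-1}\hat{se}_{ij}(\lambda v)=\sqrt{\lambda^{-1}\tilde{x}_{ij}^\top\hat{\Sigma}\,\tilde{x}_{ij}\,\lambda^{-1}}\to\sqrt{\bar v_{ij}^\top\hat{\Sigma}\bar v_{ij}}=\sqrt{v^\top\widehat{\mathrm{Var}}(\hat{\beta}_i-\hat{\beta}_j)v},
\]
by continuity of the square root.

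For the critical value, the max statistic is built from the Gaussian vector $\mathcal N(0,\tilde{X}_S\hat{\Sigma}\tilde{X}_S^\top)$ standardized by $\hat{se}_{ij}$. It is invariant to rescaling each coordinate. Its law therefore depends only on the correlation matrix $D_\lambda^{-1/2}\tilde{X}_S\hat{\Sigma}\tilde{X}_S^\top D_\lambda^{-1/2}$. Dividing numerator and denominator by $\lambda^2$ shows that this correlation matrix converges to the correlation matrix of $\bar X_S\hat{\Sigma}\bar X_S^\top$. The law of the max of a Gaussian vector is continuous in its correlation matrix, for example via a coupling $Z_\lambda=R_\lambda^{1/2}G$. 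The limiting max $\mathcal T_\infty$ has a continuous distribution function that is strictly increasing near its quantile. Under these facts, quantiles converge, so $\hat t(\alpha,\lambda v)\to t^\infty$. Combining these limits, the endpoints of $\lambda^{-1}C(1-\alpha,\lambda v,(i,j))$ converge to those of $C_\infty$.

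\textbf{Third claim.} Dividing an interval by $\lambda>0$ does not change whether it lies in $(0,\infty)$, in $(-\infty,0)$, or contains $0$. So $N_j^{\pm}(\lambda v)$ can be computed from the normalized intervals. The limiting intervals have half-width $t^\infty\sqrt{\cdot}$ around $v^\top(\hat{\beta}_i-\hat{\beta}_j)$. Suppose no limiting endpoint equals $0$, which holds almost surely because $\hat{\beta}$ has a continuous distribution. Then convergence of the endpoints implies that, for large $\lambda$, each normalized interval has the same position relative to $0$ as its limit. Hence $N_j^{\pm}$ and the rank sets are eventually constant and equal to $R_j^\infty$.

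\textbf{Main obstacle.} The hardest step is the convergence of the bootstrap critical value. I would handle it with the correlation-matrix argument above. The additional requirement is that each $v^\top\widehat{\mathrm{Var}}(\hat{\beta}_i-\hat{\beta}_j)v>0$, which follows from $\hat{\Sigma}$ being positive definite on $\Theta$ together with $\bar v_{ij}\in\Theta$ being nonzero. I would also note that the tie-free endpoint condition is needed for the third claim.
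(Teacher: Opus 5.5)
Your argument is sound apart from one unjustified claim (final paragraph). For the first two bullets it follows the paper's route: both proofs take termwise limits of $\lambda^{-1}(\hat\theta_j-\hat\theta_k)$, of $\lambda^{-1}\hat{se}_{jk}(\lambda v)$, and of the critical value. Your ``eventually every pairwise sign is that of $v^\top(\hat\beta_i-\hat\beta_j)$'' is just the explicit form of the paper's divide-by-$\lambda$ step.

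The real difference is the critical value. The paper passes $\lambda\to\infty$ inside a max statistic written with $\hat\beta-\beta$ and then invokes the continuous mapping theorem. That conflates the sampling law of $\hat\beta$ with the parametric-bootstrap Gaussian law that actually defines $\hat t$. You instead condition on the data and treat $\hat t$ as a deterministic function of the correlation matrix $R_\lambda$. You then use continuity of the Gaussian max law in $R$, plus continuity and strict monotonicity of the limiting CDF at the quantile. This matches what is computed, and it exposes the tacit requirement $v^\top\widehat{\mathrm{Var}}(\hat\beta_i-\hat\beta_j)v>0$.

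For the third bullet the paper offers no argument at all. Yours is the right one: position relative to $0$ is invariant under division by $\lambda>0$, and converging endpoints eventually share the limit's position. You are also right that it needs the limiting endpoints to avoid $0$; without that, the bullet is false. Take $M=2$ and $S=\{(1,2),(2,1)\}$. The symmetric critical value is then constant in $\lambda$. If a limiting lower endpoint equals $0$, the normalized finite-$\lambda$ lower endpoint is $\kappa\lambda^{-1}+O(\lambda^{-2})$, where $\kappa$ depends on the intercept difference and the intercept--slope covariance and can be positive. In that case the pair is resolved for all large $\lambda$ but unresolved in the closed limiting interval, so the rank sets converge to singletons rather than to $R_j^\infty=\{1,2\}$.

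The correction: your claim that the tie-free condition holds almost surely ``because $\hat\beta$ has a continuous distribution'' is not valid here. With the paper's fixed design, $\hat\beta$ is a function of the binary outcomes $y_{a_l}$, so for each $L$ it takes at most $2^L$ values. A bootstrap $\hat\Sigma$ built from finitely many resamples is discrete too, and the covariates in both applications (token counts, category indicators) are discrete. So the no-zero-endpoint condition, and likewise positive definiteness of $\hat\Sigma$ on $\Theta$, should be explicit hypotheses of the proposition, or the conclusion should be restricted to the event where they hold, rather than derived.
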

\noindent
Proposition~\ref{prop:asymptotic_extrapolation} shows that, under extreme prompts, intrinsic utility components $\hat{\beta}_{0i}$ become asymptotically negligible, and rankings are governed entirely by the covariate-specific effects $\hat{\beta}_i$. Importantly, the limiting behavior of the confidence sets depends on whether projected differences $v^\top(\hat{\beta}_i-\hat{\beta}_j)$ are statistically distinguishable. When these differences are small relative to their estimation uncertainty, the limiting confidence sets remain wide, reflecting fundamental uncertainty that cannot be resolved by extrapolation.

This phenomenon has direct implications for LLM evaluation. In our illustrative example in Section \ref{sec:illegpromplength}, taking $v = \mathbf{1}$ and interpreting $\lambda$ as prompt length, the estimated covariate differences $\hat{\beta}_i - \hat{\beta}_j$ are not statistically different for any pair of models. Consequently, $0 \in C_\infty(1-\alpha,(i,j),v)$ for all $(i,j)$, and the asymptotic rank confidence sets degenerate to the uninformative range $[1,M]$, where $M=5$. 

\section{Numerical Experiments}
\label{sec:numericalexp}
This section evaluates the empirical implications of prompt-dependent ranking inference using large-scale human preference data. Our objective is not to optimize predictive accuracy, but to examine how accounting for prompt characteristics and ranking uncertainty alters the conclusions a decision maker would draw from LLM leaderboards.  Code for reproducing all empirical results is available at the anonymous repository at \url{https://anonymous.4open.science/r/EC-2026-575/}.

\subsection{Prompt Categories}
\begin{figure}[t]
    \vskip 0.2in
  \begin{center}
  \centerline{\includegraphics[width=0.7\columnwidth]{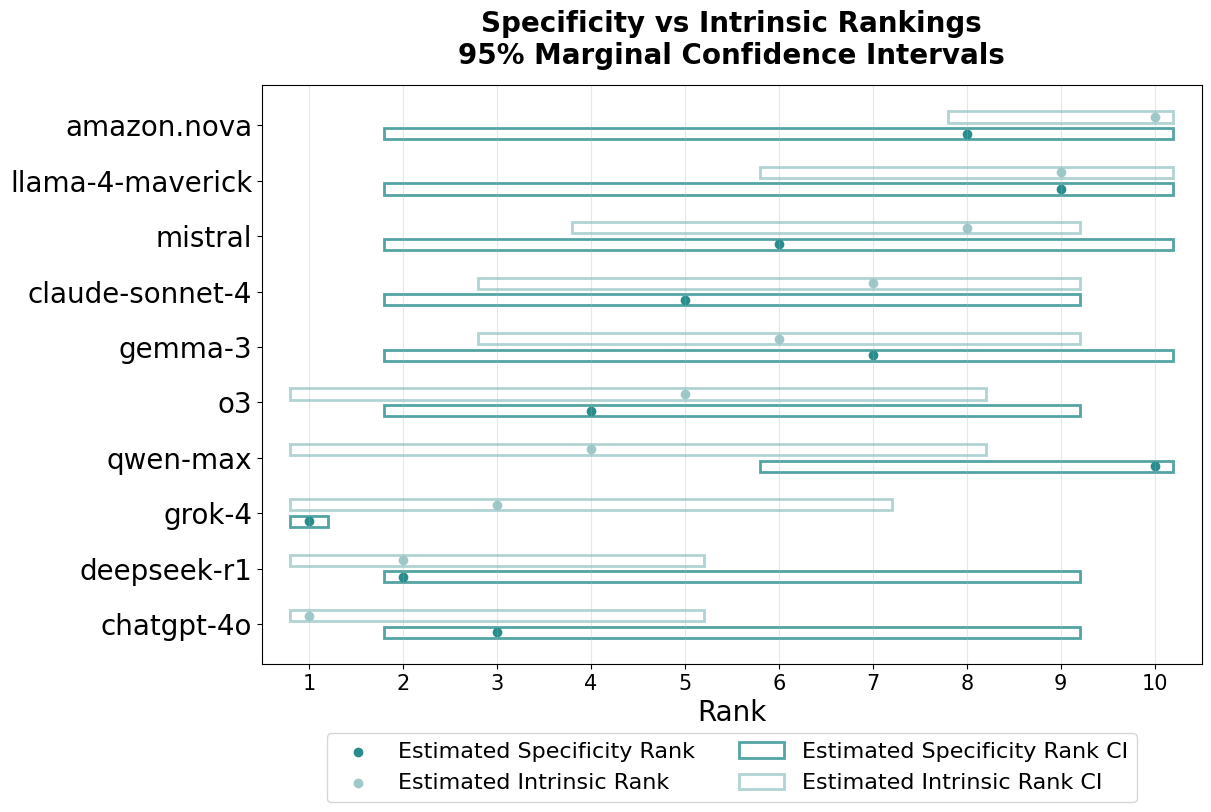}}
    \caption{\textbf{Prompt-dependent rankings and ranking uncertainty for the Specificity category.}
    Predicted rankings and 95\% marginal confidence intervals for ten LLMs under
    intrinsic preferences and under prompts exclusively labeled with the
    Specificity category.  Intrinsic estimates, corresponding to a zero covariate vector, exhibit wide confidence intervals, indicating substantial uncertainty in aggregate
    preferences.  Introducing the Specificity category alters both predicted ranks and their
    uncertainty.  While many apparent rank differences remain statistically insignificant,
    Grok-4 exhibits statistically supported dominance with a singleton confidence
    interval.  The figure illustrates how prompt characteristics affect both rankings and
    their reliability, and why uncertainty-aware rankings are essential for decision making.}
    \label{fig:specificity_intrinsic_results}
    \end{center}
\end{figure}

\paragraph{\textbf{Dataset and setup.}}
We apply our framework to the Arena Human Preference 140k dataset
\citep{chiang2024chatbot}, which contains approximately 140,000 human-annotated
pairwise comparisons of LLM responses. Each comparison is associated with a
user-provided prompt and a binary preference outcome. We focus on a subset of ten
widely used models: Amazon Nova-pro, ChatGPT-4o, Claude Sonnet-4, DeepSeek-R1,
Grok-4, Llama-4 Maverick, Mistral Small, O3, Qwen-Max, and Gemma-3.

The dataset provides ten prompt category tags: Code, Creative Writing,
Complexity, Creativity, Domain Knowledge, Problem Solving, Real World,
Specificity, Technical Accuracy, and Math. We encode these categories as a binary
covariate vector $x \in \{0,1\}^{10}$, where each entry indicates whether the
prompt reflects a given category. For example, the prompt \textit{''List 5 catchy and well-thought brand names that "combine" Lego and resume building''} is associated with the tags \emph{Domain Knowledge}, \emph{Specificity}, and \emph{Technical Accuracy}, corresponding to the covariate vector $(0, 0, 0, 0, 1, 0, 0, 1, 1, 0)^\top$.  Multi-turn conversations are decomposed into independent pairwise comparisons, and ties are excluded to ensure compatibility with the BTL model \eqref{eq:btl}.
This setup allows us to study how rankings and ranking uncertainty vary with
observable prompt characteristics.

\paragraph{\textbf{Exclusive categories.}}
We begin by isolating the effect of individual prompt categories. As a baseline,
we consider \emph{intrinsic rankings}, corresponding to the covariate vector
$x = 0$, which capture aggregate preferences not explained by prompt categories.
We then compare these to rankings obtained under \emph{exclusive category}
inputs, where exactly one category indicator equals one.

Figure~\ref{fig:specificity_intrinsic_results} illustrates this comparison for
the Specificity category. The intrinsic rankings exhibit wide marginal confidence
intervals for most models, indicating substantial uncertainty in aggregate
preferences. One notable exception is Amazon Nova-pro, whose confidence interval
lies entirely in the lower ranks, suggesting it is consistently disfavored
regardless of prompt content.

Introducing the Specificity category alters both predicted ranks and their
uncertainty. Most models continue to exhibit wide confidence intervals,
indicating that this category alone does not sharply distinguish preferences.
However, Grok-4 stands out. Its predicted rank improves from 3 to 1, and its
confidence interval collapses from width 6 to a singleton, indicating
statistically resolved dominance for highly specific prompts. In contrast,
Qwen-Max experiences a large rank decline from 4 to 10 accompanied by a shift in its
confidence interval from $[1,8]$ to $[6,10]$.

These results highlight the importance of accounting for ranking uncertainty and prompt characteristics. While point estimates suggest a hierarchy
across models, uncertainty-aware rankings reveal that many apparent differences
are not statistically meaningful. At the same time, as Grok-4 demonstrates, our framework can identify instances where a model is preferred for a prompt.

\begin{table}[t]
\caption{\textbf{Prompt-dependent rankings and uncertainty for single-category prompts.}
Predicted rankings and 95\% marginal confidence intervals for ten LLMs under
prompts associated with a single category tag.
Each cell reports the predicted rank and 95\% marginal confidence interval when the input
covariate vector contains a single nonzero entry corresponding to the column’s
category.
Wide intervals indicate weakly identified preferences, while narrow 
intervals indicate statistically supported dominance for that category.}
\label{tab:category_tags_table}
  \begin{minipage}{0.95\columnwidth}
		\begin{center}
\resizebox{\columnwidth}{!}{
        \begin{tabular}{lccccc}
\toprule
LLM & Code & Creative Writing & Complexity & Creativity & Domain Knowledge \\
\midrule
ChatGPT-4o & 2 [1,4] & 4 [1,7] & 1 [1,5] & 3 [2,7] & 3 [1,7] \\
DeepSeek-R1 & 3 [1,6] & 1 [1,7] & 2 [1,6] & 5 [2,9] & 2 [1,6] \\
Grok-4 & 5 [3,8] & 2 [1,6] & 3 [1,8] & 1 [1,2] & 1 [1,5] \\
Qwen-Max & 1 [1,4] & 6 [1,10] & 5 [1,9] & 4 [2,9] & 6 [2,8] \\
O3 & 6 [4,9] & 7 [2,10] & 4 [1,8] & 6 [2,9] & 4 [1,7] \\
Gemma-3 & 8 [4,9] & 3 [1,7] & 6 [2,9] & 8 [4,9] & 5 [1,8] \\
Claude Sonnet-4 & 4 [1,8] & 5 [1,8] & 7 [3,10] & 7 [2,9] & 9 [7,10] \\
Mistral & 7 [4,9] & 10 [6,10] & 10 [7,10] & 2 [1,7] & 8 [5,9] \\
Llama-4 Maverick & 9 [6,10] & 8 [5,10] & 8 [3,10] & 9 [4,9] & 7 [3,9] \\
Amazon Nova & 10 [9,10] & 9 [6,10] & 9 [4,10] & 10 [10,10] & 10 [9,10] \\
\bottomrule
\end{tabular}
}
\end{center}
		\bigskip\centering
	\end{minipage}
\end{table}%

\begin{table}[t]
\caption{\textbf{Prompt-dependent rankings and uncertainty for single-category prompts
(continued).}
Predicted rankings and 95\% marginal confidence intervals for ten LLMs under
additional single-category prompt tags.
Each cell reports the predicted rank and  95\% marginal confidence interval.
The results illustrate cross-category heterogeneity in model performance.}
\label{tab:category_tags_table2}
  \begin{minipage}{\columnwidth}
		\begin{center}
\resizebox{0.95\columnwidth}{!}{
        \begin{tabular}{lccccc}
\toprule
LLM & Problem Solving & Real World & Specificity & Technical Accuracy & Math \\
\midrule
ChatGPT-4o & 3 [1,8] & 1 [1,4] & 3 [2,9] & 1 [1,6] & 3 [1,7] \\
DeepSeek-R1 & 1 [1,8] & 3 [1,6] & 2 [2,9] & 2 [1,8] & 5 [1,8] \\
Grok-4 & 9 [6,10] & 9 [4,10] & 1 [1,1] & 6 [1,10] & 2 [1,7] \\
Qwen-Max & 2 [1,8] & 2 [1,6] & 10 [6,10] & 3 [1,8] & 1 [1,7] \\
O3 & 6 [1,9] & 4 [1,9] & 4 [2,9] & 5 [1,8] & 7 [2,10] \\
Gemma-3 & 5 [1,8] & 6 [2,10] & 7 [2,10] & 8 [2,10] & 10 [7,10] \\
Claude-Sonnet-4 & 4 [1,8] & 5 [2,10] & 5 [2,9] & 7 [2,10] & 4 [1,7] \\
Mistral & 7 [1,9] & 10 [5,10] & 6 [2,10] & 4 [1,8] & 6 [1,8] \\
Llama-4 Maverick & 10 [8,10] & 8 [4,10] & 9 [2,10] & 10 [6,10] & 8 [5,10] \\
Amazon Nova & 8 [1,10] & 7 [4,10] & 8 [2,10] & 9 [6,10] & 9 [7,10] \\
\bottomrule
\end{tabular}
}
\end{center}
		\bigskip\centering

	\end{minipage}
\end{table}%

\paragraph{\textbf{Cross-category heterogeneity and task specialization.}}
Tables~\ref{tab:category_tags_table} and~\ref{tab:category_tags_table2} extend
this analysis to all ten prompt categories. Three consistent patterns emerge.

First, ChatGPT-4o and DeepSeek-R1 behave as robust generalists and are generally favorable. They rank in the
top five across all categories, and their lower confidence bounds are typically
close to the top ranks, indicating stable performance across diverse tasks.

Second, Amazon Nova-pro and Llama-4 Maverick are consistently disfavored. Across
categories, their upper confidence bounds are high, and in several cases they
are significantly ranked last, suggesting that their underperformance is robust
to prompt variation.

Third, several models exhibit clear task specialization. Grok-4 performs
particularly well on prompts associated with Creativity, Domain Knowledge, and
Specificity, often ranking first with relatively narrow confidence intervals.
Qwen-Max, by contrast, excels on Code and Math prompts but performs poorly on
creative tasks.

From a decision perspective, these patterns indicate that a single global
leaderboard is an inadequate summary of model performance. Prompt-dependent
rankings with uncertainty quantification allow a decision maker to distinguish
robust generalists from specialists and to avoid overconfident conclusions when
preferences are weakly identified.

\begin{figure}[t]
    \vskip 0.2in
  \begin{center}
  \centerline{\includegraphics[width=0.7\columnwidth]{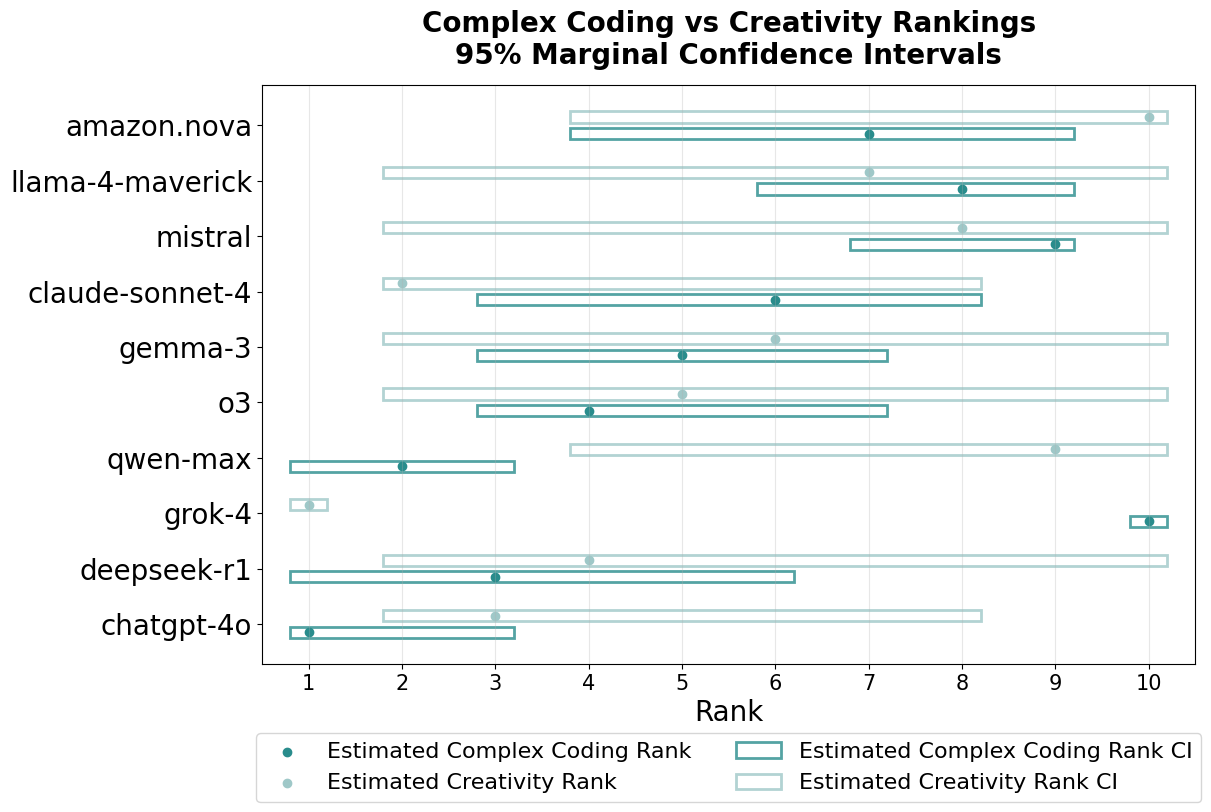}}
    \caption{\textbf{Rankings and uncertainty under multi-category prompts.}
    Predicted rankings and 95\% marginal confidence intervals for ten LLMs under
    two composite prompt types. Bold estimates correspond to prompts associated with Code, Complexity, Domain Knowledge, Problem Solving, Real World, and Technical Accuracy,
    representing complex coding tasks.  Lighter estimates correspond to prompts associated with Creativity and Creative Writing. The figure highlights task-dependent specialization and shows that ranking uncertainty differs substantially across prompt mixtures.}
    \label{fig:category_mix_comparison_results}
    \end{center}
\end{figure}

\paragraph{\textbf{Category mixtures.}}
Real-world prompts typically reflect multiple categories. To examine such
settings, we compare two composite covariate vectors. The first  covariate $x_{code}$ represents complex coding tasks, associated with Code, Complexity, Domain Knowledge, Problem
Solving, Real World, and Technical Accuracy. The second covariate  $x_{creative}$ represents creative tasks
and includes tags Creativity Creative Writing, Real World, and Specificity.

Figure~\ref{fig:category_mix_comparison_results} reports rankings and 95\% marginal confidence intervals for these two composite covariates. Confidence intervals are systematically narrower
for complex coding prompts $x_{code}$, suggesting either 
greater data availability or easier to differentiate preferences for the complicated coding task than for creative tasks. In contrast, creative prompts $x_{creative}$ exhibit wider
confidence intervals, reflecting greater subjectivity and heterogeneity in
human preferences.

The specialization pattern becomes particularly clear. Qwen-Max ranks near the
top at $2$ with a narrow confidence interval $[1, 3]$ for complex coding prompts $x_{code}$, but drops sharply in rank to $9$ for creative prompts $x_{creative}$ with confidence interval $[4, 10]$. Grok-4 displays the opposite behavior,
ranking last with a confidence interval of $[10]$ for coding tasks $x_{code}$,  but first with a singleton confidence interval $[1]$ for
creative tasks $x_{creative}$,  making it a significantly favorable model for creative tasks.

\subsection{Prompt Length}

\begin{table}[t]
	\caption{\textbf{Prompt-length–dependent ranking uncertainty.}
Predicted 95\% simultaneous confidence intervals for the ranks of five LLMs across ranges of prompt token counts.
Each entry reports the set of ranks consistent with the data at the given token length. As prompt length increases, confidence intervals widen, reflecting growing uncertainty in relative performance. Beyond approximately 1127 tokens, all models become statistically indistinguishable, and the ranking confidence sets collapse to the uninformative range $[1,5]$.}
	\label{tab:token_length_rank_ci}
	\begin{minipage}{\columnwidth}
		\begin{center}
        \resizebox{0.75\columnwidth}{!}{
			\begin{tabular}{llrlllll}
\toprule
Region Range & GPT-4-1106 & GPT-4-0613 & GPT-3.5 & Claude-2.1 & Vicuna-7b \\
\midrule
$[0,14)$      & [1] & [2,3] & [2,4] & [3,4] & [5] \\
$[14,88)$     & [1] & [2]   & [3,4] & [3,4] & [5] \\
$[89,131)$    & [1] & [2]   & [3,5] & [3,5] & [3,5] \\
$[131,240)$   & [1] & [2,3] & [3,5] & [3,5] & [2,5] \\
$[240,351)$   & [1,2] & [2,3] & [3,5] & [3,5] & [1,5] \\
$[362,470)$   & [1,2] & [2,5] & [2,5] & [2,5] & [1,5] \\
$[470,974)$   & [1,3] & [1,5] & [2,5] & [2,5] & [1,5] \\
$[979,1122)$  & [1,4] & [1,5] & [2,5] & [1,5] & [1,5] \\
$[1122,1124)$ & [1,5] & [1,5] & [1,5] & [1,5] & [1,5] \\
$[1124,1127)$ & [1,4] & [1,5] & [2,5] & [1,5] & [1,5] \\
$[1127, \infty)$ & [1,5] & [1,5] & [1,5] & [1,5] & [1,5] \\
        \bottomrule
        \end{tabular}
        }
		\end{center}
		\bigskip\centering

	\end{minipage}
\end{table}%

We next study prompt length by extending the illustrative
example in Section~\ref{sec:illegpromplength}. Table~\ref{tab:token_length_rank_ci} reports 95\% simultaneous ranking confidence intervals for five models across different
token counts. The confidence intervals were calculated for each token amount in the range $[0, 2000]$. We observe the following patterns.

For short and moderate prompts, rankings are partially identified. In
particular, for token counts between 89 and 131, GPT-4-1106 and GPT-4-0613 are significantly ranked first and second, respectively, while the remaining models are statistically indistinguishable. As prompt length increases further, confidence intervals widen, and beyond 1127 tokens all models become
indistinguishable.

This behavior reflects increasing uncertainty in relative performance for long prompts. Proposition~\ref{prop:asymptotic_extrapolation} characterizes this phenomenon asymptotically. Along the extrapolation direction corresponding to increasing prompt length, the estimated covariate effects governing relative performance are not statistically resolved. Specifically, we take the direction $v = \mathbf 1$ and interpret $\lambda$ as the number of tokens in a prompt. In this setting, the covariate parameters $\hat{\beta}_i$ are scalar, and the limiting normalized confidence sets for utility differences take the form
\[
    C_\infty(1-\alpha, S, 1)
    =
    \prod_{(i,j)\in S}
    \left[
    \hat{\beta}_i - \hat{\beta}_j
    \pm
    t^{\infty}_{\mathrm{symm},\alpha/2}
    \sqrt{\widehat{\mathrm{Var}}(\hat{\beta}_i-\hat{\beta}_j)}
    \right].
\]
Because these differences are not statistically distinguishable from zero for
any pair of models, all pairwise orderings remain unresolved in the limit.
Consequently, the induced ranking confidence sets converge to the uninformative
range $[1,5]$ for all LLMs, as illustrated in
Figure~\ref{fig:token_extrapolation_uncertainty_results} in the Appendix.

\subsection{Decision Implications}

The experimental results have direct implications for ranking-based decision
making. First, they show that acting on point-estimate rankings can lead to
overconfident and potentially inefficient decisions, especially when preference
signals are weak or heterogeneous. In many settings, uncertainty-aware rankings
reveal partial identification rather than strict dominance, suggesting that
robust decision rules should avoid committing to a single ordering.

Second, prompt-dependent rankings enable specialization-aware decisions. Rather
than deploying a single globally ranked model, a decision maker can route
prompts to models that are statistically supported as dominant for the relevant
prompt characteristics. This is particularly important in mixed or heterogeneous
environments, where global rankings obscure task-specific strengths.

Finally, when rankings become uninformative, as in the case of long prompts,
uncertainty-aware inference provides a principled signal to refrain from
fine-grained selection. In such cases, decisions based on cost, latency, or
robustness considerations may be more appropriate than ranking-based selection.
Overall, these results highlight the importance of treating rankings as
decision-relevant objects and incorporating uncertainty directly into
ranking-based mechanisms.

\section{Related Work}
\label{sec:relatedwork}
This paper relates to three strands of literature: ranking LLMs using human preference data, statistical inference for rankings, and contextual ranking models with covariates.

\paragraph{\textbf{Ranking LLMs from human preferences.}}
A growing literature evaluates and ranks large language models using pairwise human preference data, exemplified by systems such as LMArena and related benchmarks \citep{bai2022traininghelpfulharmlessassistant, wang2023aligning, chiang2024chatbot, carraro2025enhancing}. These approaches typically model comparisons using a BTL model with a single latent utility per model, estimated via maximum likelihood. The resulting point estimates are reported as leaderboards and increasingly used to guide deployment, routing, and model selection decisions.

While effective for large-scale aggregation, this approach implicitly assumes that model quality is homogeneous across inputs and that estimated rankings are well identified. In practice, LLM performance varies substantially across prompts and task types, and rankings are derived from noisy and finite samples of human judgments. Uncertainty is usually quantified at the utility level, if at all, and ranks are inferred indirectly. As a result, many reported rank differences may not be statistically meaningful, yet are still acted upon in downstream systems. Our work addresses these limitations by allowing rankings to depend explicitly on the evaluation context and by providing valid uncertainty quantification directly at the level of rankings.

\paragraph{\textbf{Statistical inference for ranks.}}
A second strand studies statistical inference for rankings more broadly.
\citet{hall2009using} propose a bootstrap approach to approximate the distribution of predicted ranks. Other methods proceed by constructing confidence intervals for individual utilities and inferring ranks as a secondary step \citep{klein2020joint,gao2023uncertainty}. Several recent works instead focus on confidence sets constructed from pairwise utility differences \citep{liu2023lagrangian,fan2024covariates,fan2025ranking}, which more directly characterize the objects that determine rankings.
A key contribution in this literature is \citet{mogstad2024inference}, who show that rank inference based on marginal confidence intervals for individual utilities can lead to excessively wide confidence sets and coverage strictly exceeding the nominal level. They also demonstrate that bootstrap-based rank inference can fail to satisfy pointwise coverage requirements in certain settings. That work provides general and valid constructions for rank confidence sets under minimal assumptions.

Our approach builds directly on these insights. We adopt confidence sets based on pairwise utility differences and extend the framework to contextual pairwise comparison models, where the objects of interest are rankings conditional on observed covariates. This extension introduces new challenges, as ranks become functions of both estimated parameters and the evaluation context.

\paragraph{\textbf{Contextual and covariate-based ranking models.}}
A third strand incorporates covariates into ranking models, allowing utilities to depend on observable attributes. \citet{fan2024covariates} study a BTL model with item-level covariates and estimate latent utility parameters via maximum likelihood. \citet{Han08012026} allow utility parameters to evolve over time while holding item covariates fixed, and \citet{dong2025statistical} allow covariates to vary across comparisons, with an application to ranking tennis players over their careers. \citet{chau2022spectral} develop a related framework using spectral methods for latent item-covariate utility functions.
These works focus primarily on estimation and prediction of utilities and rankings but do not provide uncertainty quantification for rankings themselves, nor do they treat the evaluation context as the primary driver of ranking variation. 

In contrast, our setting treats the prompt as the key covariate, reflecting the fact that LLM performance is inherently task- and query-dependent. Our contribution is to combine contextual ranking models with valid statistical inference for ranks, enabling prompt-dependent rankings that can be used safely in downstream decision-making.

\section{Conclusion and Discussion}
\label{sec:conclusion}
This paper studies ranking as a decision-relevant inference problem in the evaluation and deployment of LLMs. LLMs are increasingly used as core components of economic and computational systems, and rankings derived from human preference data play a central role in guiding model selection, routing, and resource allocation. We introduce a statistical framework for prompt-dependent ranking inference that accounts for both contextual heterogeneity in LLM performance and uncertainty arising from finite and noisy human evaluations. By considering the contextual BTL model to allow prompt-dependent utility functions, we provide a principled way to construct rankings that vary with the input prompt and come with valid coverage guarantees. Our empirical analysis using large-scale human preference data demonstrates that LLM rankings are highly sensitive to prompt characteristics such as length and semantic category. Many apparent rank differences suggested by point-estimate leaderboards are not statistically distinguishable once uncertainty is accounted for. At the same time, our framework identifies cases of statistically supported dominance, revealing meaningful task specialization across models. These results highlight a fundamental limitation of global LLM leaderboards: when rankings are treated as fixed and well identified, they can induce overconfident and potentially inefficient deployment decisions.

From an economic perspective, our findings underscore that LLM rankings should be viewed as inputs to decision-making mechanisms rather than as definitive performance summaries. Uncertainty-aware, prompt-dependent rankings enable robust allocation rules that exploit dominance when it is supported by the data and otherwise avoid committing to arbitrary orderings. This perspective is particularly important for modern LLM systems, where queries are heterogeneous and models differ along multiple dimensions such as accuracy, cost, latency, and specialization.

Several directions for future research follow naturally from this work. One promising avenue is mechanism design for adaptive evaluation, in which human comparisons are selected strategically to reduce ranking uncertainty with minimal evaluation cost. Another direction is to incorporate ranking uncertainty directly into LLM routing and selection problems under budget or latency constraints. The framework can also be extended to account for user-specific preferences, strategic feedback, or interactions among models in ensemble systems. On the theoretical side, it is of interest to study asymptotic regimes in which both the number of LLMs and the number of comparisons grow, as well as to develop nonparametric or robust variants that maintain valid rank inference under model misspecification. More broadly, this work points toward a unified view of LLM evaluation in which statistical inference, uncertainty, and economic decision-making are treated jointly, reflecting the increasingly central role that LLMs play in modern algorithmic and market-based systems.

\bibliography{main}



\newpage
\appendix
\begin{center}
    \Large{Appendix to ``Prompt-Dependent Ranking of Large Language Models with Uncertainty Quantification"}
\end{center}

\vspace{0.1in}
\noindent
This appendix provides additional numerical results in Section~\ref{sec:appendilluseg}, as well as proofs of the theoretical results stated in the main paper, collected in Section~\ref{sec:proofs}.

\vspace{0.1in}

\section{Additional Numerical Results}
\label{sec:appendilluseg}

Table~\ref{tab:token_regions_ranking_table} reports the prompt-dependent
point-estimate rankings corresponding to Figure~\ref{fig:token_length_bump_chart}. As the token count increases, the estimated ordering changes multiple times, reflecting intersections of the fitted utility functions. In particular, Vicuna-7b is predicted to achieve the top rank for sufficiently long prompts, even beyond its nominal context window. This behavior illustrates a limitation of relying solely on point-estimate rankings and motivates the need for explicit uncertainty quantification.

\begin{table}[ht]
\caption{Predicted preference rankings of LLMs across token counts in a prompt. Vicuna-7b is predicted to be ranked first for high token count prompts, even though the prompts may be longer than its context window. This shows a limitation in the predictions and why it is important to quantify the uncertainty of the rankings.}
\label{tab:token_regions_ranking_table}
  \begin{minipage}{\columnwidth}
		\begin{center}
        \begin{tabular}{c c c c c c}
\toprule
Region Range & Rank 1 & Rank 2 & Rank 3 & Rank 4 & Rank 5 \\
\midrule
$[0, 159]$    &gpt-4-1106  &gpt-4-0613    &gpt-3.5 &claude-2.1   &vicuna-7b\\
$(159, 163]$  &gpt-4-1106    &gpt-4-0613  &gpt-3.5   &vicuna-7b  &claude-2.1\\
$(163, 265]$  &gpt-4-1106    &gpt-4-0613  &vicuna-7b &gpt-3.5  &claude-2.1\\
$(265, 267]$  &gpt-4-1106    &vicuna-7b   &gpt-4-0613 &gpt-3.5  &claude-2.1\\
$(267, 455]$  &gpt-4-1106    &vicuna-7b   &gpt-4-0613 &claude-2.1   &gpt-3.5\\
$(455, 1212]$ &vicuna-7b    &gpt-4-1106  &gpt-4-0613    &claude-2.1  &gpt-3.5\\
$(1212, 9131]$    &vicuna-7b   &gpt-4-0613 &gpt-4-1106   &claude-2.1 &gpt-3.5\\
$(9131, \infty)$    &vicuna-7b   &gpt-4-0613 &claude-2.1   &gpt-4-1106 &gpt-3.5\\
\bottomrule
\end{tabular}
\end{center}
		\bigskip\centering

	\end{minipage}
\end{table}%

To complement these results, Figure~\ref{fig:token_extrapolation_uncertainty_results}
visualizes the 95\% simultaneous confidence intervals for utility differences
computed using only the covariate (slope) parameters. These intervals correspond to the asymptotic extrapolation regime analyzed in Proposition~\ref{prop:asymptotic_extrapolation}, in which the prompt length grows unboundedly and intrinsic utility components become negligible. As shown in the figure, all pairwise utility-difference intervals contain zero, indicating that no relative ordering is statistically resolved in this regime. That is, the induced ranking confidence sets span the full range of possible ranks, which suggests that the rankings are uninformative.

\begin{figure}[t]
  \begin{center}
  \centerline{\includegraphics[width=\columnwidth]{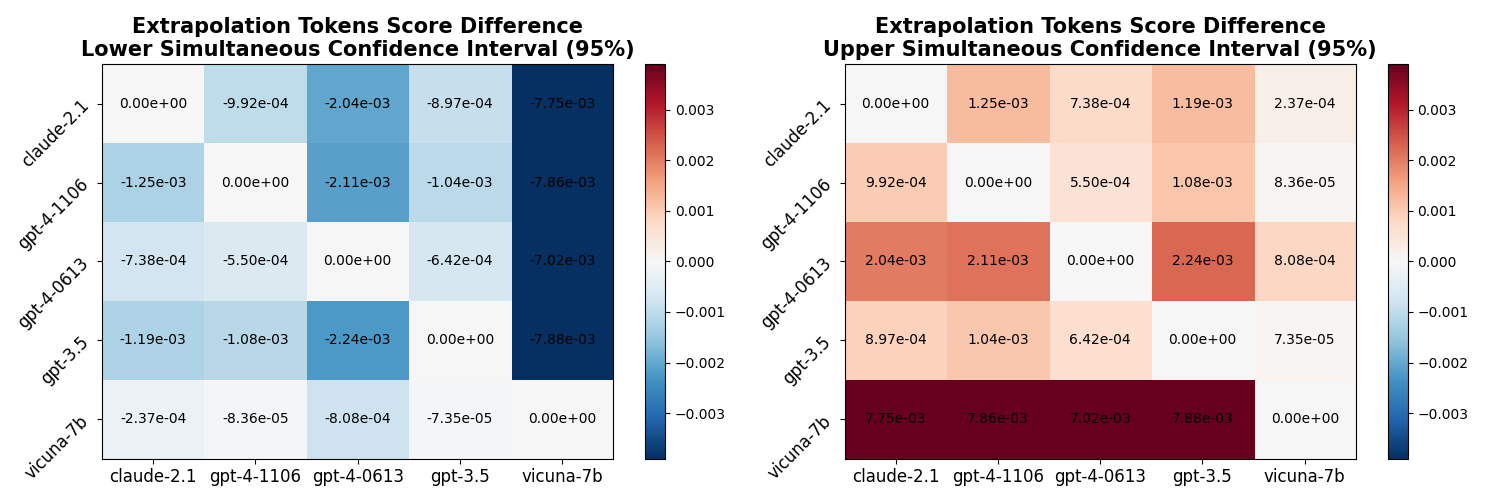}}
    \caption{Asymptotic 95\% simultaneous confidence intervals for covariate-only utility differences under prompt-length extrapolation. The left panel shows lower bounds and the right panel shows upper bounds. All intervals contain zero, indicating that no pairwise ordering is statistically resolved in the extreme prompt-length regime.}
    \label{fig:token_extrapolation_uncertainty_results}
    \end{center}
\end{figure}

\section{Proofs}
\label{sec:proofs}
In this section, we provide the proofs of the main theoretical results. These depend on the gradient and Hessian of the loss function, provided below
\begin{align*}
   \nabla \ell(\tilde{\beta}) &= \sum_{l=1}^{L}-\left(y_{a_l} -  \sigma(\tilde{\beta}^\top\tilde{x}_{a_l})\right)\tilde{x}_{a_l} \\
   \nabla^2 \ell(\tilde{\beta}) &= \sum_{l=1}^{L} \sigma(\tilde{\beta}^\top\tilde{x}_{a_l})(1-\sigma(\tilde{\beta}^\top\tilde{x}_{a_l}))\tilde{x}_{a_l}\tilde{x}_{a_l}^\top.\\
\end{align*}

\subsection{Proof of Theorem \ref{thm:mle_asymptotic_normality}}
\subsubsection{Main Proof of Theorem \ref{thm:mle_asymptotic_normality}}
\begin{proof}
    By the Mean Theorem for the utility function
    \begin{equation*}
        \nabla\ell(\hat{\beta}) = \nabla\ell(\tilde{\beta}^*) + \nabla^2\ell(\bar{\beta})(\hat{\beta} - \tilde{\beta}^*)
    \end{equation*}
    for $\bar{\beta} = \tilde{\beta}^* + \tau(\hat{\beta} - \tilde{\beta}^*)$ for some $\tau\in[0,1]$. Since $\hat{\beta}, \tilde{\beta}^*\in\Theta$, $\mathcal{P}(\hat{\beta} - \tilde{\beta}^*) = (\hat{\beta} - \tilde{\beta}^*)$. We can rewrite the above as
    \begin{equation*}
        \mathcal{P}\nabla\ell(\hat{\beta}) = \mathcal{P}\nabla\ell(\tilde{\beta}^*) + \mathcal{P}\nabla^2\ell(\bar{\beta})\mathcal{P}(\hat{\beta} - \tilde{\beta}^*).
    \end{equation*}
    Since $\mathcal{P}\nabla\ell(\hat{\beta}) = 0$, by rescaling we can write
    \begin{equation*}
        \mathcal{P}\frac{1}{\sqrt{L}}\nabla\ell(\tilde{\beta}^*) = -\mathcal{P}\left(\frac{1}{L}\nabla^2\ell(\bar{\beta})\right)\mathcal{P}\sqrt{L}(\hat{\beta} - \tilde{\beta}^*).
    \end{equation*}
    From Lemma \ref{lem:bar_hessian_convergence}, we know $\frac{1}{L}\nabla^2\ell(\bar{\beta}) - \frac{1}{L}\mathcal{I}_L(\tilde{\beta}^*) \xrightarrow{P} 0$. Therefore, using Slutsky's Theorem and Lemma \ref{lem:score_asymptotic_distribution}, we can write 
     \begin{align*}
         \mathcal{P}\frac{1}{\sqrt{L}}\nabla\ell(\tilde{\beta}^*) &\approx -\mathcal{P}\bar{\mathcal{I}}(\tilde{\beta}^*)\mathcal{P}\sqrt{L}(\hat{\beta} - \tilde{\beta}^*)\\
         \rightarrow \sqrt{L}(\hat{\beta} - \tilde{\beta}^*) &\approx -(\mathcal{P}\bar{\mathcal{I}}(\tilde{\beta}^*)\mathcal{P})^\dagger\mathcal{P}\frac{1}{\sqrt{L}}\nabla\ell(\tilde{\beta}^*),
     \end{align*}
     where $A^\dagger$ is the Moore-Penrose Pseudoinverse. By Lemma \ref{lem:score_asymptotic_distribution}, \begin{equation*}
         (\mathcal{P}\bar{\mathcal{I}}(\tilde{\beta}^*)\mathcal{P})^\dagger\mathcal{P}\frac{1}{\sqrt{L}}\nabla\ell(\tilde{\beta}^*) \xrightarrow{d} \mathcal{N}(0, \Sigma),
     \end{equation*} 
     where $\Sigma = (\mathcal{P}\bar{\mathcal{I}}\mathcal{P})^\dagger\mathcal{P}\bar{\mathcal{I}}\mathcal{P}(\mathcal{P}\bar{\mathcal{I}}\mathcal{P})^\dagger = (\mathcal{P}\bar{\mathcal{I}}\mathcal{P})^\dagger$. The result follows. 
\end{proof}

\subsubsection{Auxiliary Lemmas for the Proof of Theorem \ref{thm:mle_asymptotic_normality}}
\begin{lemma}
    Suppose Assumptions~\ref{as:comparison_graph_connectivity}--\ref{as:limit_fisher} hold. Then
    \begin{equation*}
        \hat{\beta}_L \xrightarrow{P} \tilde{\beta}^*
    \end{equation*}
    \label{lem:consistency}
\end{lemma}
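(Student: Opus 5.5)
The plan is the standard M-estimation consistency argument for a convex objective restricted to the affine subspace $\Theta$. Work with the normalized loss $\ell_L(\tilde\beta)=L^{-1}\ell(\tilde\beta)$ and its expectation $\bar\ell_L(\tilde\beta)=L^{-1}\mathbb{E}[\ell(\tilde\beta)]$, both restricted to $\Theta$.

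\textbf{Step 1 (convexity).} Each summand is the logistic negative log-likelihood in $\tilde\beta^\top\tilde x_{a_l}$, so $\ell_L$ is convex on $\Theta$. For convex functions, pointwise convergence in probability on a dense countable set upgrades to uniform convergence on compacts (Andersen--Gill / Pollard convexity lemma). This means I only need pointwise convergence.

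\textbf{Step 2 (pointwise LLN).} Fix $\tilde\beta\in\Theta$. Then
\[
\ell_L(\tilde\beta)-\bar\ell_L(\tilde\beta)=-L^{-1}\sum_{l}\bigl(y_{a_l}-\sigma(\tilde\beta^{*\top}\tilde x_{a_l})\bigr)\,\tilde\beta^\top\tilde x_{a_l},
\]
because the $\log(1+e^{z})$ terms are deterministic given the design. This is an average of independent, mean-zero terms. By Assumption~\ref{as:boundedness} they are bounded by $B_2\|\tilde\beta\|$. Chebyshev gives convergence to $0$ at rate $L^{-1/2}$.

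\textbf{Step 3 (local strong convexity of the limit).} Since $\bar\ell_L$ is minimized over $\Theta$ at $\tilde\beta^*$, its gradient projected by $\mathcal{P}$ vanishes there. Its Hessian at $\tilde\beta^*$ is $L^{-1}\mathcal{I}_L(\tilde\beta^*)\to\bar{\mathcal I}$, which is positive definite on $\Theta$ by Assumption~\ref{as:limit_fisher}; Assumptions~\ref{as:comparison_graph_connectivity}--\ref{as:design_matrix_rank} underpin this.

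On a ball $\{\tilde\beta\in\Theta:\|\tilde\beta-\tilde\beta^*\|\le\delta\}$ the linear indices $\tilde\beta^\top\tilde x_{a_l}$ stay uniformly bounded, by Assumption~\ref{as:boundedness} together with $\|\tilde x\|<B_2$. Hence $\sigma(1-\sigma)$ is bounded below by some $c>0$ on this ball. Consequently, for all large $L$,
\[
\bar\ell_L(\tilde\beta)-\bar\ell_L(\tilde\beta^*)\ge \tfrac{c\,\lambda_{\min}}{2}\|\tilde\beta-\tilde\beta^*\|^2,
\]
where $\lambda_{\min}$ is the smallest eigenvalue of $\bar{\mathcal I}$ on $\Theta$, reduced by a factor accounting for the curvature weight $c$.

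\textbf{Step 4 (conclude).} Let $\eta=\inf_{\|\tilde\beta-\tilde\beta^*\|=\delta}\bigl[\bar\ell_L(\tilde\beta)-\bar\ell_L(\tilde\beta^*)\bigr]>0$. By Step 1, with probability tending to one,
\[
\sup_{\|\tilde\beta-\tilde\beta^*\|\le\delta}\bigl|\ell_L-\bar\ell_L\bigr|<\eta/3 .
\]
On that event, $\ell_L(\tilde\beta)>\ell_L(\tilde\beta^*)$ on the sphere of radius $\delta$. Convexity of $\ell_L$ then forces every minimizer over $\Theta$ to lie inside the ball, so $\hat\beta_L$ exists and $\|\hat\beta_L-\tilde\beta^*\|<\delta$. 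Since $\delta$ is arbitrary, $\hat\beta_L\xrightarrow{P}\tilde\beta^*$.

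\textbf{Main obstacle.} The hardest step is Step 3: turning positive definiteness of the limiting Fisher information at the single point $\tilde\beta^*$ into a uniform quadratic lower bound on a neighborhood. This also needs care with the nonrandom design, because Assumption~\ref{as:limit_fisher} weights $\sigma(1-\sigma)$ by the sampling probabilities $p_a$.

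My first attempt would be to compare the Hessian at $\tilde\beta$ with the one at $\tilde\beta^*$ via the ratio bound
\[
\frac{\sigma(z)(1-\sigma(z))}{\sigma(z')(1-\sigma(z'))}\ge e^{-|z-z'|},
\]
with $|z-z'|\le B_2\delta$. This gives $\nabla^2\bar\ell_L(\tilde\beta)\succeq e^{-B_2\delta}L^{-1}\mathcal I_L(\tilde\beta^*)$ on the ball, and hence strong convexity with constant $e^{-B_2\delta}\lambda_{\min}/2$.
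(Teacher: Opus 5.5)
Your argument is correct, but it takes a different route from the paper's. The paper works directly with the sample loss on the sphere $\|\tilde\beta-\tilde\beta^*\|=\epsilon$. It Taylor-expands $\ell$ at $\tilde\beta^*$ and bounds the linear term by $O_p(\sqrt{L})\,\epsilon$ using the score CLT (Lemma~\ref{lem:score_asymptotic_distribution}). It bounds the quadratic term below by $KL\epsilon^2$ using the Hessian LLN (Lemma~\ref{lem:hessian_wlln}) and positive definiteness of $\bar{\mathcal I}$, and controls the remainder by $CL\epsilon^3$ through a Lipschitz Hessian. It then takes $\epsilon$ small enough that the quadratic term dominates, and concludes that a local minimum lies in the ball. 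You instead split $\ell_L=\bar\ell_L+(\ell_L-\bar\ell_L)$. The stochastic part is handled by a pointwise Chebyshev bound upgraded to uniformity. The deterministic part is strongly convex by the comparison $\sigma(z)(1-\sigma(z))\ge e^{-|z-z'|}\sigma(z')(1-\sigma(z'))$. Your route buys three things. It needs no CLT and no third-derivative remainder. Its curvature constant $e^{-B_2\delta}\lambda_{\min}$ holds on a ball of any radius, not only for $\epsilon$ small enough to beat a cubic term. And your convexity step in Step 4 shows that the \emph{global} constrained minimizer exists with probability tending to one and lies in the ball; the paper only exhibits a local minimum and leaves implicit why it coincides with $\hat\beta$. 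The paper's route, in exchange, reuses lemmas that Theorem~\ref{thm:mle_asymptotic_normality} needs anyway. Run with $\epsilon=c_0/\sqrt{L}$ and $c_0$ large, the same expansion essentially yields $\sqrt{L}$-consistency, whereas yours gives consistency without a rate. Two tidy-ups are worth making. First, the Pollard/Andersen--Gill convexity lemma is usually stated with a fixed limit, whereas $\bar\ell_L$ moves with $L$. You can bypass it by noting that $\ell_L$ and $\bar\ell_L$ are both $B_2$-Lipschitz in $\tilde\beta$, since the derivative of the logistic loss in the index lies in $(-1,1)$; convergence on a finite net then suffices. Step 2 also survives random pairs $a_l$, because the extra $\log(1+e^{z})$ fluctuations are bounded and independent. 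Second, the ``$c\lambda_{\min}/2$'' phrasing in Step 3 double-counts the weight already built into $\bar{\mathcal I}$. The ratio bound in your last paragraph is the correct statement, and it gives $\eta\ge e^{-B_2\delta}\lambda_{\min}\delta^2/4$ uniformly for large $L$, which is the uniform-in-$L$ margin Step 4 actually needs.
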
 

\begin{proof}[Proof of Lemma \ref{lem:consistency}]
    We want to show that for any $\epsilon > 0$, the probability that $\hat{\beta}$ lies within the ball $\mathcal{B}_\epsilon(\tilde{\beta}^*) = \{\tilde{\beta}: \lVert \tilde{\beta}-\tilde{\beta}^*\rVert \leq \epsilon\}$ approaches 1. It suffices to show that for every $\epsilon > 0$
    \begin{equation*}
        \lim_{L\rightarrow \infty}\mathbb{P}(\inf_{\tilde{\beta} \in\partial\mathcal{B}_\epsilon(\tilde{\beta}^*)} \ell(\tilde{\beta}) > \ell(\tilde{\beta}^*) ) = 1.
    \end{equation*}
    Consider the Taylor expansion around $\tilde{\beta}^*$
    \begin{equation*}
        \ell(\tilde{\beta}) = \ell(\tilde{\beta}^*) + (\tilde{\beta}-\tilde{\beta}^*)^\top\nabla \ell(\tilde{\beta}^*) + \frac{1}{2}(\tilde{\beta}-\tilde{\beta}^*)^\top\nabla^2\ell(\bar{\beta})(\tilde{\beta}-\tilde{\beta}^*)
    \end{equation*}
    for some $\tau\in[0,1]$, $\bar{\beta} = \tilde{\beta} + \tau(\tilde{\beta}^*-\tilde{\beta})$. Let $\Delta \ell(\tilde{\beta}) = \ell(\tilde{\beta}) -\ell(\tilde{\beta}^*)$ and $u = \tilde{\beta}-\tilde{\beta}^*$. We focus on the case where $\Vert u\rVert = \epsilon$. Then
    \begin{equation*}
       \ell_L(\tilde{\beta}) - \ell_L(\tilde{\beta}^*) = \underbrace{\nabla \ell_L(\tilde{\beta}^*)^\top u}_{t_1} + \underbrace{\frac{1}{2} u^\top \nabla^2 \ell_L(\tilde{\beta}^*) u}_{t_2} + \underbrace{\frac{1}{2} u^\top \left(\nabla^2 \ell_L(\bar{\beta}) - \nabla^2 \ell_L(\tilde{\beta}^*)\right) u}_{t_3}
    \end{equation*}
    For $t_1$, by Lemma~\ref{lem:score_asymptotic_distribution}, $\frac{1}{\sqrt{L}}\nabla\ell(\tilde{\beta}^*) \xrightarrow{d} \mathcal{N}(0, \bar{\mathcal{I}}(\tilde{\beta}^*))$. Therefore
    \begin{equation*}
        \lVert \nabla \ell(\tilde{\beta}^*)\rVert_2 = O_p\left(\sqrt{L}\right)
    \end{equation*} and 
    \begin{equation*}
        u^\top\nabla \ell(\tilde{\beta}^*) = O_p\left(\sqrt{L}\right)\lVert u\rVert = O_p\left(\sqrt{L}\right)\epsilon
    \end{equation*}
    For the $t_2$, 
    By Lemma~\ref{lem:hessian_wlln}, we have $\frac{1}{L}\nabla^2\ell(\tilde{\beta}^*) \xrightarrow{P} \bar{\mathcal{I}}$. Since $\bar{\mathcal{I}}$ is positive definite, we can bound this term from below with the minimum eigenvalue $\lambda_{min}(\bar{\mathcal{I}}) > 0$. Therefore, with probability approaching 1,
    \begin{equation*}
    \frac{1}{2}u^\top\nabla^2\ell(\tilde{\beta}^*)u \geq \frac{1}{2} L \lambda_{min}(\bar{\mathcal{I}})\lVert u\rVert^2 = K\epsilon^2 L.
     \end{equation*}
     For $t_3$, we observe that the third derivatives of $\ell(\tilde{\beta})$ are bounded by our assumptions of the covariates. This implies that $\nabla^2 \ell(\tilde{\beta})$ is Lipschitz continuous on $\Theta$ and
    \begin{equation*}
        \left\lVert \nabla^2 \ell(\bar{\beta}) - \nabla^2 \ell(\tilde{\beta}^*)\right\rVert \leq CL\lVert \bar{\beta} - \tilde{\beta}^*\rVert \leq CL\epsilon.
    \end{equation*}
    Then, the remainder is bounded by
    \begin{equation*}
    \frac{1}{2} \lVert u \rVert^2 \lVert \nabla^2 \ell_L(\bar{\beta}) - \nabla^2 \ell_L(\tilde{\beta}^*) \rVert_{op} \leq \frac{1}{2} \epsilon^2 (C L \epsilon) = O(L \epsilon^3).
    \end{equation*}
Combining these results, with probability approaching 1,
     \begin{equation*}
         \ell(\tilde{\beta}) -\ell(\tilde{\beta}^*) \geq - |O_p(\sqrt{L})| \epsilon + KL\lambda_{\min} \epsilon^2 - C L\epsilon^3.
     \end{equation*}
We choose $\epsilon$ small enough such that the quadratic term dominates the cubic term, then with probability approaching 1, $\ell(\tilde{\beta}) > \ell(\tilde{\beta}^*)$ for all $\tilde{\beta} \in \partial\mathcal{B}_\epsilon(\tilde{\beta}^*)$. Since $\ell(\tilde{\beta})$ is a continuous function of $\tilde{\beta}$, this implies there exists a local minimum $\hat{\beta}_L$ inside the ball. Therefore
     \begin{equation*}
         \lim_{L\rightarrow\infty} P(\lVert \hat{\beta} - \tilde{\beta}^*\rVert \leq \epsilon) = 1.
     \end{equation*}
\end{proof}

\begin{lemma}
    Suppose $\lVert \tilde{x}_{a_l}\rVert_2 \leq B_2$ and $|\tilde{\beta}^{\top}\tilde{x}_{a_l}| < \infty$ for all $l\in[L]$ and $a\in\mathcal{E}$. Then
    \begin{equation*}
        \frac{1}{L}\nabla^2 \ell(\tilde{\beta}) \xrightarrow{P} \frac{1}{L}\mathcal{I}_L(\tilde{\beta}).
    \end{equation*}
    Furthermore, if $\lim_{L\rightarrow\infty} \frac{1}{L}\mathcal{I}_L(\tilde{\beta}) = \bar{\mathcal{I}}(\tilde{\beta})$, then
    \begin{equation*}
        \frac{1}{L}\nabla^2 \ell(\tilde{\beta}) \xrightarrow{P} \bar{\mathcal{I}}(\tilde{\beta}).
    \end{equation*}
    \label{lem:hessian_wlln}
\end{lemma}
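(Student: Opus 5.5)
The plan is to observe that the Hessian of the logistic negative log-likelihood does not depend on the outcomes $y_{a_l}$:
\[
\nabla^2\ell(\tilde\beta)=\sum_{l=1}^L w_l(\tilde\beta)\,\tilde x_{a_l}\tilde x_{a_l}^\top,
\qquad w_l(\tilde\beta)=\sigma(\tilde\beta^\top\tilde x_{a_l})\bigl(1-\sigma(\tilde\beta^\top\tilde x_{a_l})\bigr).
\]
The only randomness therefore comes from which pair $a_l$ is drawn, with probabilities $p_a$, $a\in\mathcal E$, together with the prompt $x_l$ attached to comparison $l$. I will read $\mathcal{I}_L(\tilde\beta)$ in Assumption~\ref{as:limit_fisher} as $\sum_{l}\mathbb E[Z_l]$, where $Z_l:=w_l(\tilde\beta)\tilde x_{a_l}\tilde x_{a_l}^\top$. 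This is the conditional expectation over the pair selection given the prompt, and it is what the double sum with weights $p_a$ encodes. Under this reading the first claim is a weak law of large numbers for independent, non-identically distributed, bounded random matrices. If the design is regarded as fixed, the difference $\frac1L\nabla^2\ell(\tilde\beta)-\frac1L\mathcal I_L(\tilde\beta)$ is identically zero and there is nothing to prove.

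The steps are as follows.
\begin{enumerate}
\item Bound the summands. Since $0<w_l\le 1/4$ and $\|\tilde x_{a_l}\|_2\le B_2$, we get $\|Z_l\|_{op}\le B_2^2/4$. In particular, every entry satisfies $|(Z_l)_{rs}|\le B_2^2/4$ uniformly in $l$ and $L$. The hypothesis that $\tilde\beta^\top\tilde x_{a_l}$ is bounded is not even needed for this upper bound; it only ensures $w_l$ stays bounded away from $0$, which matters for positive definiteness elsewhere.
\item Use independence. The comparisons are independent across $l$, so for each fixed entry $(r,s)$ Chebyshev's inequality gives
\[
\mathbb P\Bigl(\Bigl|\tfrac1L\sum_{l}\bigl((Z_l)_{rs}-\mathbb E(Z_l)_{rs}\bigr)\Bigr|>\eta\Bigr)\le \frac{\sum_l \mathrm{Var}((Z_l)_{rs})}{L^2\eta^2}\le \frac{B_2^4}{16L\eta^2}\to0 .
\]
\item Pass from entries to the matrix. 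The dimension $M+Md$ is fixed, so a union bound over the finitely many entries turns entrywise convergence into convergence in Frobenius (hence operator) norm. This proves $\frac1L\nabla^2\ell(\tilde\beta)-\frac1L\mathcal I_L(\tilde\beta)\xrightarrow{P}0$, which is the first claim.
\item Obtain the second claim. Write
\[
\Bigl\|\tfrac1L\nabla^2\ell(\tilde\beta)-\bar{\mathcal I}(\tilde\beta)\Bigr\|\le\Bigl\|\tfrac1L\nabla^2\ell(\tilde\beta)-\tfrac1L\mathcal I_L(\tilde\beta)\Bigr\|+\Bigl\|\tfrac1L\mathcal I_L(\tilde\beta)-\bar{\mathcal I}(\tilde\beta)\Bigr\|.
\]
The first term is $o_p(1)$ by the previous steps. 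The second is a deterministic sequence tending to zero by hypothesis. Slutsky's lemma (or simply this triangle inequality) then gives $\frac1L\nabla^2\ell(\tilde\beta)\xrightarrow{P}\bar{\mathcal I}(\tilde\beta)$.
\end{enumerate}

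The main obstacle is not analytic but definitional. The proof needs the comparisons to be independent across $l$ and needs $\mathcal I_L$ to equal the sum of the per-comparison expectations of the Hessian summands. I would state this interpretation explicitly at the start of the proof. Once it is fixed, the variance bound in step~2 is immediate from boundedness, and nothing beyond Chebyshev's inequality is required.
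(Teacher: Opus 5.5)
Your proposal is correct and follows essentially the same route as the paper. The paper also bounds the entries of $H^{(l)}=\sigma(1-\sigma)\tilde{x}_{a_l}\tilde{x}_{a_l}^\top$, applies Chebyshev entrywise with $\frac{1}{L}\sum_l \mathbb{E}[H^{(l)}]=\frac{1}{L}\mathcal{I}_L(\tilde{\beta})$, and then uses the triangle inequality for the second claim. Your explicit statement that the comparisons are independent across $l$ (which the paper's $\sum_l \mathrm{Var}$ step silently needs) and your union bound over the fixed number of entries make precise what the paper leaves implicit.
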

\begin{proof}[Proof of Lemma \ref{lem:hessian_wlln}]
    Let $H^{(l)} = \sigma(\tilde{\beta}^\top \tilde{x}_{a_l})(1-\sigma(\tilde{\beta}^\top \tilde{x}_{a_l}))\tilde{x}_{a_l}\tilde{x}_{a_l}^\top$ such that $\nabla^2\ell(\tilde{\beta}) = \sum_{l=1}^L H^{(l)}$. Since $\lVert \tilde{x}_{a_l}\rVert_2 \leq B_2$ and $|\tilde{\beta}^{\top}\tilde{x}_{a_l}| < \infty$, we have $H^{(l)}_{u,v} \leq c_1$ for some $c_1 > 0$ for all matrix elements $u,v$. Since these values are bounded, their variance $\text{Var}(H^{(l)}_{u,v}) \leq C_1$ for some $C_1 > 0$. By Chebyshev's Inequality, for any $\epsilon > 0$ and all elements $u,v$
    \begin{equation*}
        P\left(\left|\frac{1}{L}\sum_{l=1}^L H_{uv}^{(l)} - \mathbb{E}[H_{uv}^{(l)}]\right| > \epsilon\right) \leq \frac{\sum_{l=1}^L \text{Var}(H_{uv}^{(l)})}{L^2\epsilon^2} \leq \frac{LC_1}{L^2\epsilon^2} = \frac{C_1}{L\epsilon^2}.
    \end{equation*}
    As $L\rightarrow\infty$, this probability converges to 0. Since \begin{equation*}
        \frac{1}{L}\sum_{l=1}^L \mathbb{E}[H^{(l)}] = \frac{1}{L}\sum_{l=1}^L\sum_{a\in\mathcal{E}}p_{a}\sigma(\tilde{\beta}^\top \tilde{x}_{a_l})(1-\sigma(\tilde{\beta}^\top \tilde{x}_{a_l}))\tilde{x}_{a_l}\tilde{x}_{a_l}^\top = \frac{1}{L}\mathcal{I}_L(\tilde{\beta}),
    \end{equation*}
    the first result follows. \\
    If $\lim_{L\rightarrow\infty} \frac{1}{L}\mathcal{I}_L(\tilde{\beta}) = \bar{\mathcal{I}}(\tilde{\beta})$, then
    \begin{equation*}
        \lVert\frac{1}{L}\nabla^2 \ell(\tilde{\beta}) - \bar{\mathcal{I}}(\tilde{\beta})\rVert \leq \lVert\frac{1}{L}\nabla^2 \ell(\tilde{\beta}) - \frac{1}{L}\mathcal{I}_L(\tilde{\beta})\rVert + \lVert\frac{1}{L}\mathcal{I}_L(\tilde{\beta}) - \bar{\mathcal{I}}(\tilde{\beta})\rVert \xrightarrow{P} 0
    \end{equation*}
\end{proof}

\begin{lemma}
    Let $\bar{\beta} = \tilde{\beta}^* + \tau(\hat{\beta} - \tilde{\beta}^*)$ for some $\tau\in[0,1]$. Suppose $\lVert \tilde{x}_{a_l}\rVert_2 \leq B_2$ and $|\tilde{\beta}^{\top}\tilde{x}_{a_l}| < \infty$ for all $l\in[L]$ and $a\in\mathcal{E}$. Then
    \begin{equation*}
        \frac{1}{L}\nabla^2 \ell(\bar{\beta}) \xrightarrow{P} \frac{1}{L}\mathcal{I}_L(\tilde{\beta}^*).
    \end{equation*}
    \label{lem:bar_hessian_convergence}
\end{lemma}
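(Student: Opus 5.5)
We need to prove Lemma \ref{lem:bar_hessian_convergence}: $\frac{1}{L}\nabla^2\ell(\bar{\beta}) - \frac{1}{L}\mathcal{I}_L(\tilde{\beta}^*) \xrightarrow{P} 0$ for $\bar\beta$ on the segment between $\tilde\beta^*$ and $\hat\beta$. The argument splits this difference with the triangle inequality,
\begin{equation*}
\Bigl\lVert \tfrac{1}{L}\nabla^2\ell(\bar\beta) - \tfrac{1}{L}\mathcal{I}_L(\tilde\beta^*)\Bigr\rVert
\le
\underbrace{\Bigl\lVert \tfrac{1}{L}\nabla^2\ell(\bar\beta) - \tfrac{1}{L}\nabla^2\ell(\tilde\beta^*)\Bigr\rVert}_{A_L}
+
\underbrace{\Bigl\lVert \tfrac{1}{L}\nabla^2\ell(\tilde\beta^*) - \tfrac{1}{L}\mathcal{I}_L(\tilde\beta^*)\Bigr\rVert}_{B_L}.
\end{equation*}
The term $B_L\xrightarrow{P}0$ is exactly the first conclusion of Lemma \ref{lem:hessian_wlln} at the fixed point $\tilde\beta=\tilde\beta^*$. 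The hypotheses of that lemma hold by Assumption \ref{as:boundedness}.

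For $A_L$, I use the Lipschitz structure of the Hessian. Write $w(z)=\sigma(z)(1-\sigma(z))$, so $\nabla^2\ell(\beta)=\sum_l w(\beta^\top\tilde x_{a_l})\tilde x_{a_l}\tilde x_{a_l}^\top$. Since $|w'(z)|=|w(z)(1-2\sigma(z))|\le 1/4$ for all $z$, the mean value theorem gives
\begin{equation*}
|w(\bar\beta^\top\tilde x_{a_l})-w(\tilde\beta^{*\top}\tilde x_{a_l})|\le \tfrac14 |(\bar\beta-\tilde\beta^*)^\top\tilde x_{a_l}|\le \tfrac14 B_2\lVert\bar\beta-\tilde\beta^*\rVert.
\end{equation*}
Together with $\lVert\tilde x_{a_l}\tilde x_{a_l}^\top\rVert_{op}\le B_2^2$, summing over $l$ and dividing by $L$ yields $A_L\le \tfrac14 B_2^3\lVert\bar\beta-\tilde\beta^*\rVert \le \tfrac14 B_2^3\lVert\hat\beta-\tilde\beta^*\rVert$, because $\bar\beta$ lies on the segment and $\tau\in[0,1]$.

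It therefore remains to show $\lVert\hat\beta-\tilde\beta^*\rVert\xrightarrow{P}0$, which is Lemma \ref{lem:consistency}, and then to combine the two terms. This bound holds uniformly over $\tau$, so the randomness of $\tau$ (it may depend on the data) causes no difficulty. That uniformity is the one subtle point, and the deterministic Lipschitz bound handles it without any empirical-process argument.

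The main obstacle is a circularity. The proof of Lemma \ref{lem:consistency} invokes Lemma \ref{lem:score_asymptotic_distribution} and Lemma \ref{lem:hessian_wlln}, but not the present lemma, so citing consistency here is legitimate. If the reader worries about that dependence, a fallback is available: the convexity of $\ell$ together with the eigenvalue lower bound on $\Theta$ gives consistency directly. Finally, when $\frac1L\mathcal{I}_L(\tilde\beta^*)\to\bar{\mathcal{I}}$ (Assumption \ref{as:limit_fisher}), this also yields $\frac1L\nabla^2\ell(\bar\beta)\xrightarrow{P}\bar{\mathcal{I}}$, which is the form used in the proof of Theorem \ref{thm:mle_asymptotic_normality}.
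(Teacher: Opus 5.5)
Your proposal is correct and follows the paper's own proof: the same triangle-inequality split, with the second term handled by Lemma~\ref{lem:hessian_wlln} and the first by a Lipschitz bound on the normalized Hessian combined with the consistency result of Lemma~\ref{lem:consistency}. Your explicit constant $\tfrac14 B_2^3$ from $|w'|\le\tfrac14$ makes concrete the paper's appeal to bounded third derivatives, and your remark that the bound holds uniformly in $\tau$ is a useful clarification.
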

\begin{proof}[Proof of Lemma \ref{lem:bar_hessian_convergence}]
    \begin{equation*}
        \left\lVert \frac{1}{L}\nabla^2 \ell(\bar{\beta}) - \frac{1}{L}\mathcal{I}_L(\tilde{\beta}^*)\right\rVert_\infty 
        \leq 
        \underbrace{\left\lVert \frac{1}{L}\nabla^2 \ell(\bar{\beta}) - \frac{1}{L}\nabla^2 \ell(\tilde{\beta}^*)\right\rVert_\infty}_{t_1} + \underbrace{\left\lVert \frac{1}{L}\nabla^2 \ell(\tilde{\beta}^*) - \frac{1}{L}\mathcal{I}_L(\tilde{\beta}^*)\right\rVert_\infty}_{t_2}
    \end{equation*}
    By Lemma \ref{lem:hessian_wlln}, $t_2 = o_p(1)$. For $t_1$, we observe that the third derivatives of $\ell(\tilde{\beta})$ are bounded by our assumptions of the utilities and covariates. This implies that $\frac{1}{L}\nabla^2 \ell(\tilde{\beta})$ is Lipschitz continuous on $\Theta$ and
    \begin{equation*}
        \left\lVert \frac{1}{L}\nabla^2 \ell(\bar{\beta}) - \frac{1}{L}\nabla^2 \ell(\tilde{\beta}^*)\right\rVert_\infty \leq K\lVert \bar{\beta} - \tilde{\beta}^*\rVert_\infty.
    \end{equation*}
    Since $\bar{\beta}$ is between $\hat{\beta}$ and $\tilde{\beta}^*$, the consistency of $\hat{\beta}$ shown in Lemma~\ref{lem:consistency} implies $\lVert \bar{\beta} - \tilde{\beta}^*\rVert_\infty = o_p(1)$. The result follows from this. 
\end{proof}

\begin{lemma}
    Suppose $\bar{\mathcal{I}}(\tilde{\beta}) = \lim_{L\rightarrow\infty}\frac{1}{L}\mathcal{I_L}(\tilde{\beta})$ exists for some sequence of inputs $\{x_l\}_{l=1}^L$. Then
    \begin{equation*}
        \frac{1}{\sqrt{L}}\nabla\ell(\tilde{\beta}^*) \xrightarrow{d} \mathcal{N}(0, \bar{\mathcal{I}}(\tilde{\beta}^*)).
    \end{equation*}
    \label{lem:score_asymptotic_distribution}
\end{lemma}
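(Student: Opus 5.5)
The score is a sum of independent, mean-zero terms, so the plan is a Lindeberg--Feller central limit theorem combined with the Cramér--Wold device. Write
\[
\nabla\ell(\tilde{\beta}^*)=-\sum_{l=1}^L \xi_l\,\tilde{x}_{a_l},\qquad \xi_l := y_{a_l}-\sigma(\tilde{\beta}^{*\top}\tilde{x}_{a_l}).
\]
We condition on the design sequence, meaning the pairs $a_l$ and prompts $x_l$. Under the contextual BTL model \eqref{eq:btl}, each $\xi_l$ is then a centered Bernoulli residual, and the $\xi_l$ are independent across $l$. Consequently $\mathbb{E}[\xi_l\tilde{x}_{a_l}]=0$ and
\[
\mathrm{Var}(\xi_l\tilde{x}_{a_l})=\sigma(\tilde{\beta}^{*\top}\tilde{x}_{a_l})\bigl(1-\sigma(\tilde{\beta}^{*\top}\tilde{x}_{a_l})\bigr)\tilde{x}_{a_l}\tilde{x}_{a_l}^\top.
\]
Summing over $l$, the covariance of the score is the information $\mathcal{I}_L(\tilde{\beta}^*)$ of Assumption~\ref{as:limit_fisher}. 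If instead the pairs are drawn at random with probabilities $p_a$, the same computation holds unconditionally by the tower property, and averaging over $a\in\mathcal{E}$ reproduces the $p_a$-weighted form of $\mathcal{I}_L$.

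Next we reduce to scalars. Fix any $c\in\mathbb{R}^{M+Md}$ and set $Z_{l}=-L^{-1/2}\xi_l\,c^\top\tilde{x}_{a_l}$. These form a triangular array of independent, mean-zero scalars. Their total variance is
\[
s_L^2=\sum_{l=1}^L\mathrm{Var}(Z_l)=c^\top\bigl(L^{-1}\mathcal{I}_L(\tilde{\beta}^*)\bigr)c,
\]
which converges to $c^\top\bar{\mathcal{I}}(\tilde{\beta}^*)c$ by the hypothesis on the Fisher information.

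The Lindeberg condition should be immediate. Since $|\xi_l|\le 1$ and $\|\tilde{x}_{a_l}\|_2\le B_2$ by Assumption~\ref{as:boundedness}, every term satisfies $|Z_l|\le \|c\|B_2/\sqrt{L}$. Hence for any $\eta>0$ and all sufficiently large $L$, the indicators $\mathbf{1}\{|Z_l|>\eta\}$ vanish identically, and the Lindeberg sum is exactly zero.

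There are two cases for the limit of $s_L^2$. If $c^\top\bar{\mathcal{I}}c>0$, Lindeberg--Feller gives $\sum_l Z_l\xrightarrow{d}\mathcal{N}(0,c^\top\bar{\mathcal{I}}c)$. If $c^\top\bar{\mathcal{I}}c=0$, then $\sum_l Z_l$ has variance tending to zero, so it converges in $L^2$, and hence in distribution, to the point mass $\mathcal{N}(0,0)$. This degenerate case matters because $\bar{\mathcal{I}}$ is only positive definite on $\Theta$, and $\tilde{x}$ is orthogonal to the shift directions. In either case Cramér--Wold then yields the multivariate statement.

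The main obstacle is bookkeeping rather than any hard step. We must state clearly that the randomness is in $y$ given the design, or in $(a_l,y)$ jointly, so that independence and the identification of the variance with $\mathcal{I}_L$ are legitimate. We also need $\tilde{\beta}^*$ to be the true data-generating parameter, so that $\mathbb{E}[\xi_l\mid a_l,x_l]=0$, which correct specification of \eqref{eq:btl} provides.
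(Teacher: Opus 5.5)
Your proposal is correct and follows the paper's own route: the score summands are mean-zero and uniformly bounded (since $|y-\sigma|\le 1$ and $\|\tilde{x}_{a_l}\|_2\le B_2$), their covariances sum to $\mathcal{I}_L(\tilde{\beta}^*)$, the Lindeberg condition therefore holds trivially, and Lindeberg--Feller gives the limit. Your Cram\'er--Wold reduction, the separate treatment of directions where $c^\top\bar{\mathcal{I}}c=0$, and the explicit conditioning and correct-specification bookkeeping are extra care that the paper omits. The paper instead applies the multivariate Lindeberg--Feller theorem directly, which already allows the singular limiting covariance.
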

\begin{proof}[Proof of Lemma \ref{lem:score_asymptotic_distribution}]
    Let $s_l = -\left(y_{a_l} -  \sigma(\tilde{\beta}^\top \tilde{x}_{a_l})\right)\tilde{x}_{a_l}$ such that $\nabla\ell(\tilde{\beta}^*) = \sum_{l=1}^L s_l$. Note that $\mathbb{E}[s_l] = 0$ and
    \begin{equation*}
        \text{Var}(s_l) = \sum_{a\in\mathcal{E}}p_{a} \sigma(\tilde{\beta}^\top \tilde{x}_{a_l})(1-\sigma(\tilde{\beta}^\top \tilde{x}_{a_l}))\tilde{x}_{a_l}\tilde{x}_{a_l}^\top.
    \end{equation*}
    Since $\lVert \tilde{x}_{a_l}\rVert_2 \leq B_2$ and $|\tilde{\beta}^{\top}\tilde{x}_{a_l}| < \infty$, we have that $\lVert s_l \rVert < c_2$. Therefore, for any $\epsilon > 0$, there exists $L_0$ large enough such that $\epsilon\sqrt{L} > c_2$ for all $L \geq L_0$ and
    \begin{equation*}
        \lim_{L \to \infty} \frac{1}{L} \sum_{l=1}^L E\left[ \|s_l\|^2 \cdot \mathds{1}(\|s_l\| > \epsilon \sqrt{L}) \right] = 0.
    \end{equation*}
    By the Lindeberg-Feller Central Limit Theorem,
    \begin{equation*}
        \frac{1}{\sqrt{L}}\nabla\ell(\tilde{\beta}^*) \xrightarrow{d} \mathcal{N}(0, \bar{\mathcal{I}}(\tilde{\beta}^*)).
    \end{equation*}
\end{proof}

\subsection{Proof of Corollary \ref{cor:score_difference_asymptotic_normality}}
\begin{proof}
    Given $\tilde{X}_S$, let $\Delta_S(\tilde{\beta}) = \tilde{X}_S\tilde{\beta}$. This is a continuous and differentiable function of $\tilde{\beta}$ with Jacobian $\tilde{X}_S$. By Theorem~\ref{thm:mle_asymptotic_normality}
\begin{equation*}
    \sqrt{L}(\hat{\beta}-\tilde{\beta}^*) \xrightarrow{d} \mathcal{N}(0, (\mathcal{P}\bar{\mathcal{I}}\mathcal{P})^\dagger).
\end{equation*}
From the Delta Method \cite{Vaart_1998}, it follows that
\begin{equation*}
    \sqrt{L}(\Delta_S(\hat{\beta})-\Delta(\tilde{\beta}^*)) \xrightarrow{d} \mathcal{N}(0, \tilde{X}_S(\mathcal{P}\bar{\mathcal{I}}\mathcal{P})^\dagger\tilde{X}_S^\top).
\end{equation*}
\end{proof}

\subsection{Proof of Theorem \ref{thm:score_differences_ci_coverage}}
\begin{proof}
    In this section, we establish the asymptotic coverage of the simultaneous confidence intervals for utility differences. We focus on the symmetric intervals $C_{\mathrm{symm},L}$ for notation since the logic for one-sided intervals follows similarly.

First, we establish the consistency of the bootstrap covariance estimator. Let $\hat{\beta}^*$ denote the bootstrap estimator. Under Assumption~\ref{as:boundedness} (bounded covariates) and Assumption~\ref{as:comparison_graph_connectivity} (connected comparison graph), the logistic log-likelihood is strictly convex and smooth with high probability. Following Theorem 2 of \citet{mammen2012does}, these regularity conditions imply that the bootstrap distribution of $\sqrt{L}(\hat{\beta}^* - \hat{\beta})$ consistently estimates the limiting distribution of the MLE. Furthermore, these conditions ensure the uniform integrability of the squared bootstrap error, implying the convergence of the bootstrap covariance matrix to the true asymptotic covariance \cite{shao1995jackknife}
\begin{equation}
    L\hat{\Sigma} \xrightarrow{P} \Sigma_{\infty} \equiv (\mathcal{P}\bar{\mathcal{I}}\mathcal{P})^\dagger.
    \label{eq:cov_consistency}
\end{equation}
Consequently, the estimated standard error for any pair $(i,j)$ is consistent:
\begin{equation}
    \frac{\hat{se}_{ij}(x)}{se_{ij}(x)} \xrightarrow{P} 1,
    \label{eq:se_consistency}
\end{equation}
where $se_{ij}(x) = \sqrt{\frac{1}{L}\tilde{x}_{ij}^\top\Sigma_{\infty}\tilde{x}_{ij}}$ is the true asymptotic standard error.

Next, we derive the limiting distribution of the test statistic. Let $Z$ be the random vector of standardized differences for $(i,j) \in S$, with elements:
\begin{equation*}
    Z_{ij} = \frac{\tilde{x}_{ij}^\top(\hat{\beta}-\tilde{\beta}^*)}{\hat{se}_{ij}(x)}.
\end{equation*}
By Corollary~\ref{cor:score_difference_asymptotic_normality}, the vector of unscaled errors $\sqrt{L}\tilde{x}_{ij}^\top(\hat{\beta}-\tilde{\beta}^*)$ converges in distribution to a multivariate Gaussian with covariance matrix determined by $\Sigma_{\infty}$. Combining this with the consistency of the standard errors in \eqref{eq:se_consistency}, it follows by Slutsky's Theorem that $Z$ converges to a standardized multivariate Gaussian:
\begin{equation*}
    Z \xrightarrow{d} \mathcal{Z} \sim \mathcal{N}(0, R),
\end{equation*}
where $R$ is the limiting correlation matrix with entries
\begin{equation*}
    R_{(i,j), (k,l)} = \frac{\tilde{x}_{ij}^\top \Sigma_{\infty} \tilde{x}_{kl}}{\sqrt{(\tilde{x}_{ij}^\top \Sigma_{\infty} \tilde{x}_{ij})(\tilde{x}_{kl}^\top \Sigma_{\infty} \tilde{x}_{kl})}}.
\end{equation*}
Since the max-statistic $\mathcal{T}_{\mathrm{symm}} = \max_{(i,j)\in S} |Z_{ij}|$ is a continuous functional of $Z$, the Continuous Mapping Theorem implies
\begin{equation*}
    \mathcal{T}_{\mathrm{symm}} \xrightarrow{d} \mathcal{T}_{\infty} \equiv \max_{(i,j)\in S} |\mathcal{Z}_{ij}|.
\end{equation*}

Finally, we address the convergence of the critical values. The estimated critical value $\hat{t}_{\mathrm{symm}}(\alpha, x)$ is the $(1-\alpha)$-quantile of the max-statistic simulated from $\mathcal{N}(0, \hat{\Sigma})$. As shown in \eqref{eq:cov_consistency}, the covariance estimate $\hat{\Sigma}$ is consistent, which implies the estimated correlation matrix $\hat{R}$ converges in probability to $R$. Following 
\citet{hothorn2008simultaneous}, the distribution of the test statistic calculated with the estimated correlation matrix converges to the true limit distribution. Thus, the simultaneous confidence intervals maintain the coverage rate asymptotically
\begin{equation*}
\lim_{L\rightarrow\infty}P\left(\Delta_S(x) \in C(1-\alpha, S, x)\right) = \lim_{L\rightarrow\infty}P\left(\max_{(i,j)\in S} \left| \frac{\tilde{x}_{ij}^\top(\hat{\beta} - \tilde{\beta}^*)}{\hat{se}_{ij}(x)} \right| \leq \hat{t}_{symm}(\alpha,x) \right)
\geq 1-\alpha
\end{equation*}
\end{proof}

\subsection{Proof of Theorems~\ref{thm:rank_marginal_ci_coverage} and \ref{thm:rank_simultaneous_ci_coverage}}
The proofs of the validity of the ranks confidence sets
follow the same logical structure as Theorems~3.1 and~3.3 in
\citet{mogstad2024inference}, with the key distinction that utility differences and confidence sets are indexed by the covariate $x$. 

\subsection{Proof of  Proposition \ref{prop:asymptotic_extrapolation}}
In this section, we explore the behavior of the predicted ranks and confidence sets as the input covariates grow unboundedly in a fixed direction, as stated in Proposition \ref{prop:asymptotic_extrapolation}. More formally, Let the estimated utilities be defined as $\hat{\theta}(x) = \hat{\beta}_{0i} + x^\top\hat{\beta}_i$ for $i\in[M]$. For a fixed direction $v\in\mathbb{R}^d$, let $x=\lambda v$. Assume that $v^\top\hat{\beta}_i \neq v^\top\hat{\beta}_j$ for all $i\neq j$. Then, we want to know what the predicted rank and confidence set is for each item as $\lambda \rightarrow \infty$. \\

\begin{lemma}
    Suppose $x =\lambda v, \lambda > 0$ and $\hat{\theta}_i(x)\neq\hat{\theta}_j(x)$, then
    \begin{equation*}
    \lim_{\lambda\rightarrow\infty} \hat{r}_j(x) = 1 + \sum_{i\in [M]} \mathds{1}\{v^\top\hat{\beta}_i > v^\top\hat{\beta}_j\}.
    \label{lem:extrapolation_rank}
\end{equation*}
\end{lemma}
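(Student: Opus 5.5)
The plan is to prove the lemma pairwise and then aggregate. The rank $\hat{r}_j(x)$ is a finite sum of $M-1$ indicators $\mathds{1}\{\hat{\theta}_i(x)>\hat{\theta}_j(x)\}$. It therefore suffices to show that each indicator eventually equals $\mathds{1}\{v^\top\hat{\beta}_i>v^\top\hat{\beta}_j\}$, that is, equals it for all $\lambda$ beyond some threshold $\lambda_{ij}$. Once this holds for every $i\neq j$, taking $\lambda_0=\max_{i\neq j}\lambda_{ij}$ shows that $\hat{r}_j(\lambda v)$ is eventually constant and equal to the claimed value. The limit then exists trivially, because an integer-valued function that is eventually constant converges. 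The $i=j$ term contributes nothing, since $v^\top\hat\beta_j>v^\top\hat\beta_j$ is false, so summing over $i\in[M]$ or over $i\neq j$ gives the same result.

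For a fixed pair $i\neq j$, I will write the utility difference along the ray as an affine function of $\lambda$:
\[
\hat{\theta}_i(\lambda v)-\hat{\theta}_j(\lambda v)=(\hat{\beta}_{0i}-\hat{\beta}_{0j})+\lambda\, v^\top(\hat{\beta}_i-\hat{\beta}_j)=:a_{ij}+\lambda b_{ij}.
\]
The key input is the hypothesis of Proposition~\ref{prop:asymptotic_extrapolation}, namely $v^\top\hat\beta_i\neq v^\top\hat\beta_j$, so $b_{ij}\neq 0$. Set
\[
\lambda_{ij}=\frac{|a_{ij}|}{|b_{ij}|}.
\]
For every $\lambda>\lambda_{ij}$ we have $|\lambda b_{ij}|>|a_{ij}|$, hence $\operatorname{sign}(a_{ij}+\lambda b_{ij})=\operatorname{sign}(b_{ij})$. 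Consequently $\mathds{1}\{\hat\theta_i(\lambda v)>\hat\theta_j(\lambda v)\}=\mathds{1}\{b_{ij}>0\}$, which is exactly the desired indicator. Equivalently, dividing by $\lambda>0$ shows the normalized difference converges to $b_{ij}\neq0$, and the sign is eventually stable.

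The only delicate point is the role of the lemma's hypothesis $\hat\theta_i(x)\neq\hat\theta_j(x)$ compared with the proposition's hypothesis $v^\top\hat\beta_i\neq v^\top\hat\beta_j$. The former alone does not suffice. If $b_{ij}=0$ but $a_{ij}\neq0$, the indicator is constant in $\lambda$ and equals $\mathds{1}\{a_{ij}>0\}$, while the right-hand side gives $0$. So I will make explicit that the slope-distinctness assumption from the proposition is in force. The tie-free condition is then automatic for $\lambda>\lambda_0$, since each affine function $a_{ij}+\lambda b_{ij}$ with $b_{ij}\neq0$ has at most one root. This is where I expect the only real care to be needed. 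The rest of the argument is elementary, and because the estimates $\hat\beta$ are held fixed (we work conditionally on the data), no probabilistic argument is required.
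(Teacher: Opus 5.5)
Your proof is correct and follows the same route as the paper, which divides the utility difference by $\lambda$ and passes to the limit inside each indicator; your explicit threshold $\lambda_{ij}=|a_{ij}|/|b_{ij}|$ makes that limit step rigorous. You are also right on two further points. The limit step silently needs the slope-distinctness condition $v^\top\hat\beta_i\neq v^\top\hat\beta_j$ from Proposition~\ref{prop:asymptotic_extrapolation}, because the indicator is discontinuous at zero. And the lemma's stated hypothesis $\hat\theta_i(x)\neq\hat\theta_j(x)$ alone does not suffice, as your counterexample with $b_{ij}=0$ and $a_{ij}\neq 0$ shows.
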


\begin{proof}[Proof of Lemma \ref{lem:extrapolation_rank}]
    \begin{align*}
    \hat{r}_j(x) &= 1 + \sum_{i\in [M]} \mathds{1}\{\hat{\theta}_i(x) > \hat{\theta}_j(x)\}\\
    &= 1 + \sum_{i\in [M]} \mathds{1}\{\frac{\hat{\theta}_i(x)}{\lambda} > \frac{\hat{\theta}_j(x)}{\lambda}\}\\
    &= 1 + \sum_{i\in [M]} \mathds{1}\{\frac{\hat{\beta}_{0i}(x)}{\lambda} + v^\top\hat{\beta}_i > \frac{\hat{\beta}_{0j}(x)}{\lambda} + v^\top\hat{\beta}_j\}\\
    &\rightarrow 1 + \sum_{i\in [M]} \mathds{1}\{v^\top\hat{\beta}_i > v^\top\hat{\beta}_j\}.
\end{align*}
\end{proof}

\begin{lemma}
    Suppose $x = \lambda v, \lambda > 0$. Define the normalized confidence set of utility differences $C_\lambda(1-\alpha, \lambda v, (i,j)) = \frac{1}{\lambda}C(1-\alpha, \lambda v, (i,j))$. Then
        \begin{equation*}
            \lim_{\lambda\rightarrow\infty} C_\lambda(1-\alpha, \lambda v, (i,j)) = \prod_{(i,j)\in S}\left[v^\top(\hat{\beta}_i - \hat{\beta}_j) \pm t^{\infty}_{\alpha/2}\sqrt{v^\top\widehat{Var}(\hat{\beta}_i-\hat{\beta}_j)v}\right],
        \end{equation*}
        where 
        \begin{equation*}
            t_{\alpha/2}^\infty(x) = \inf\left\{z: \mathbb{P}\left(\max_{(j, k) \in S} \frac{|v^\top(\hat{\beta}_j - \hat{\beta}_k - (\beta_j - \beta_k))|}{\sqrt{v^\top\widehat{Var}(\hat{\beta}_j - \hat{\beta}_k)v}}  \geq z\right) \geq 1-\frac{\alpha}{2}\right\}.
        \end{equation*}
    \label{lem:extrapolation_score_diff_ci}
\end{lemma}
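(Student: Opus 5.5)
The plan is to work pair by pair: write out the rectangular interval from \eqref{eq:score_diff_cis} at covariate $x=\lambda v$, divide it by $\lambda$, and take limits of its centre, its scale and its critical value separately. Throughout, the estimator $\hat{\beta}$ and the bootstrap covariance $\hat{\Sigma}$ stay fixed, because they depend only on the data and not on $\lambda$. So this is a deterministic limit in $\lambda$, and no further probabilistic argument is needed.

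\textbf{Centre and scale of each interval.} For a pair $(i,j)$, the design vector is $\tilde{x}_{ij}(\lambda v)=(e_j-e_i,\,(e_j-e_i)\otimes \lambda v)$. It splits as $\lambda w_{ij}+u_{ij}$, where $w_{ij}=(0,(e_j-e_i)\otimes v)$ is the slope part and $u_{ij}=(e_j-e_i,0)$ is the intercept part. The sign convention $\hat{\beta}_i-\hat{\beta}_j$ versus $\hat{\beta}_j-\hat{\beta}_i$ only relabels the pair, so I will match the statement's orientation.

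Then $\lambda^{-1}\tilde{x}_{ij}^\top\hat{\beta}=w_{ij}^\top\hat{\beta}+\lambda^{-1}(\hat{\beta}_{0j}-\hat{\beta}_{0i})$, which tends to the slope difference $v^\top(\hat{\beta}_j-\hat{\beta}_i)$. Likewise,
\[
\lambda^{-2}\hat{se}_{ij}(\lambda v)^2=w_{ij}^\top\hat{\Sigma}w_{ij}+2\lambda^{-1}w_{ij}^\top\hat{\Sigma}u_{ij}+\lambda^{-2}u_{ij}^\top\hat{\Sigma}u_{ij}\to v^\top\widehat{\mathrm{Var}}(\hat{\beta}_j-\hat{\beta}_i)v .
\]
Square roots are continuous, so $\lambda^{-1}\hat{se}_{ij}(\lambda v)$ converges to the square root of this limit.

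\textbf{Critical value.} The quantity $\hat{t}(\alpha,\lambda v)$ is the $(1-\alpha)$ quantile of $\max_{(i,j)\in S}|G_{ij}|/\hat{se}_{ij}(\lambda v)$, where $G\sim\mathcal{N}(0,\tilde{X}_S\hat{\Sigma}\tilde{X}_S^\top)$. This max statistic is invariant to a common rescaling, so it equals the maximum of the absolute coordinates of a centred Gaussian whose covariance is the correlation matrix $\hat{R}(\lambda)$. The entries of $\hat{R}(\lambda)$ are ratios of the quadratic forms above, each divided by $\lambda^2$. They therefore converge to the correlation matrix $\hat{R}_\infty$ built from $w_{ij}^\top\hat{\Sigma}w_{kl}$.

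The map from a correlation matrix to the law of $\max|Z|$ is weakly continuous. The limiting statistic $\mathcal{T}_\infty$ has a continuous, strictly increasing distribution function near its quantile, being the maximum of nondegenerate absolute Gaussians. Quantiles therefore converge, giving $\hat{t}(\alpha,\lambda v)\to t^\infty$. Combining the three limits coordinatewise gives convergence of the endpoints of the normalized rectangle to those of $C_\infty$; this is the second bullet.

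\textbf{Main obstacle.} The delicate step is nondegeneracy. If $w_{ij}^\top\hat{\Sigma}w_{ij}=0$ for some pair, the limiting scale vanishes, the correlation limit is undefined and the quantile may jump. I would exclude this by noting that $\hat{\Sigma}$ is positive definite on $\Theta$ (from \eqref{eq:cov_consistency} and Assumption~\ref{as:limit_fisher}), and that $w_{ij}$ is nonzero whenever $v\neq0$. Its projection onto the constraint subspace is also nonzero, because $e_j-e_i$ already sums to zero. Hence each limiting variance is strictly positive.

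\textbf{Rank sets.} For the last bullet, the rank sets depend only on whether each interval lies in $(0,\infty)$, lies in $(-\infty,0)$, or contains $0$. Dividing by $\lambda>0$ does not change these events. If no limiting endpoint equals $0$, endpoint convergence makes each event eventually constant in $\lambda$. The sets $N_j^\pm$ are then eventually constant and equal to those computed from $C_\infty$, so $R_j(\lambda v)\to R_j^\infty$. For boundary cases, where an endpoint of $C_\infty$ is exactly $0$, I would state the result under the genericity condition that no such endpoint vanishes, in the same spirit as the paper's assumption $v^\top\hat{\beta}_i\neq v^\top\hat{\beta}_j$.
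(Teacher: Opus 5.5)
Your proposal is correct and follows the paper's route: the paper also takes separate limits of the centre (Lemma~\ref{lem:asymptotic_score_difference}), the scale (Lemma~\ref{lem:asymptotic_std}), and the max statistic and its quantile (Lemmas~\ref{lem:asymptotic_max_statistic} and~\ref{lem:asymptotic_quantile}), then combines them coordinatewise. Your version is more careful, since you verify $w_{ij}^\top\hat{\Sigma}w_{ij}>0$ (which the limiting correlation matrix and the continuity and strict monotonicity of the limiting CDF at the quantile rely on) and the paper takes this for granted; note only that this positivity follows from \eqref{eq:cov_consistency} with probability tending to one rather than surely, so your deterministic limit in $\lambda$ holds on that event.
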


\begin{proof}[Proof of Lemma \ref{lem:extrapolation_score_diff_ci}]
    
To find the limit of $C_\lambda$ as $\lambda \rightarrow \infty$, we find the limit of each term in the confidence interval. Since all the limits inside $C_\lambda$ exist, as shown in Lemmas~\ref{lem:asymptotic_score_difference}, \ref{lem:asymptotic_std}, \ref{lem:asymptotic_max_statistic}, \ref{lem:asymptotic_quantile}, we have 
\begin{align*}
    \lim_{\lambda\rightarrow\infty} C_\lambda(1-\alpha, x, (j,k)) &= \prod_{(i,j)\in S}\lim_{\lambda\rightarrow\infty}\left[\frac{\hat{\theta}_j(x)-\hat{\theta}_k(x)}{\lambda} \pm t_{\alpha/2} (x)\frac{\hat{\sigma}_{jk}(x)}{\lambda}\right]\\
    &= \prod_{(i,j)\in S}\left[\lim_{\lambda\rightarrow\infty}\frac{\hat{\theta}_j(x)-\hat{\theta}_k(x)}{\lambda} \pm \left(\lim_{\lambda\rightarrow\infty}t_{\alpha/2}(x)\right)\left(\lim_{\lambda\rightarrow\infty}\frac{\hat{\sigma}_{jk}(x)}{\lambda}\right)\right]\\
    &= \prod_{(i,j)\in S}\left[v^\top(\hat{\beta}_i - \hat{\beta}_j) \pm t^{\infty}_{\alpha/2}\sqrt{v^\top\widehat{Var}(\hat{\beta}_i-\hat{\beta}_j)v}\right].
\end{align*}
\end{proof}

\subsubsection{Auxiliary Lemmas for the Proof of Proposition~\ref{prop:asymptotic_extrapolation}}
\begin{lemma}
    Suppose $x = \lambda v, \lambda > 0$. Then
    \begin{equation*}
        \lim_{\lambda\rightarrow\infty} \frac{\hat{\theta}_j(x)-\hat{\theta}_k(x)}{\lambda}  = v^\top(\hat{\beta}_j - \hat{\beta}_k)
    \end{equation*}
    \label{lem:asymptotic_score_difference}
\end{lemma}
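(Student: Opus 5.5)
The argument is purely algebraic and uses only the linear utility specification \eqref{eq:utility_linear}; no probabilistic input is needed, since $\hat{\beta}$ is held fixed and only $\lambda$ varies. I will first write the estimated utility difference at the covariate $x=\lambda v$ explicitly:
\[
\hat{\theta}_j(\lambda v)-\hat{\theta}_k(\lambda v)
=
(\hat{\beta}_{0j}-\hat{\beta}_{0k}) + \lambda\, v^\top(\hat{\beta}_j-\hat{\beta}_k),
\]
which follows from the definition $\hat{\theta}_i(x)=\hat{\beta}_{0i}+x^\top\hat{\beta}_i$. Equivalently, in the stacked notation, $\hat{\theta}_j(x)-\hat{\theta}_k(x)=\hat{\beta}^\top\tilde{x}_{kj}$ with $\tilde{x}_{kj}$ from \eqref{eq:design_vector} evaluated at $x$. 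This design vector splits as $\tilde{x}_{kj}=(e_j-e_k,\,0)^\top+\lambda\,(0,\,(e_j-e_k)\otimes v)^\top$, that is, an intercept part plus $\lambda$ times a slope part. I will record this decomposition because it is reused in the companion lemmas on the standard error.

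Next I divide by $\lambda>0$. This gives
\[
\frac{\hat{\theta}_j(\lambda v)-\hat{\theta}_k(\lambda v)}{\lambda}
=
\frac{\hat{\beta}_{0j}-\hat{\beta}_{0k}}{\lambda} + v^\top(\hat{\beta}_j-\hat{\beta}_k).
\]
The intercept difference $\hat{\beta}_{0j}-\hat{\beta}_{0k}$ is a fixed finite number, because the constrained MLE \eqref{eq:mle_definition} is a fixed vector once the data are realized. Hence the first term is $O(\lambda^{-1})$ and tends to $0$, while the second term does not depend on $\lambda$. Letting $\lambda\to\infty$ then gives the claimed limit $v^\top(\hat{\beta}_j-\hat{\beta}_k)$. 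The convergence is deterministic, and therefore holds almost surely, on the event that $\hat{\beta}$ exists.

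The only point requiring care is this existence and finiteness of $\hat{\beta}$. A logistic MLE can fail to exist under separation. Under Assumptions~\ref{as:comparison_graph_connectivity}--\ref{as:limit_fisher} and Lemma~\ref{lem:consistency}, however, $\hat{\beta}$ exists and is finite with probability tending to one. I will state the lemma conditionally on that event, or simply treat $\hat{\beta}$ as given, as the proposition does.

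Beyond this, there is no real obstacle. The argument is a one-line limit of an affine function of $\lambda$ divided by $\lambda$. I will note that the same decomposition immediately shows $\hat{se}_{jk}(\lambda v)/\lambda\to\sqrt{v^\top\widehat{\mathrm{Var}}(\hat{\beta}_j-\hat{\beta}_k)v}$, which is what the next auxiliary lemma needs.
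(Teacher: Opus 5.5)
Your proof is correct and follows the paper's argument: you expand $\hat{\theta}_j(\lambda v)-\hat{\theta}_k(\lambda v)$ as $(\hat{\beta}_{0j}-\hat{\beta}_{0k})+\lambda v^\top(\hat{\beta}_j-\hat{\beta}_k)$, divide by $\lambda$, and let the $O(\lambda^{-1})$ intercept term vanish. Your remark that $\hat{\beta}$ is held fixed as a realized finite vector is a harmless clarification that the paper leaves implicit.
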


\begin{proof}
    For the estimated utility difference,
\begin{align*}
    \frac{\hat{\theta}_j(x)-\hat{\theta}_k(x)}{\lambda} &= \frac{\hat{\beta}_{0j} - \hat{\beta}_{0k} + \lambda v^\top(\hat{\beta}_j - \hat{\beta}_k)}{\lambda}\\
    &=\frac{\hat{\beta}_{0j} - \hat{\beta}_{0k}}{\lambda} + v^\top(\hat{\beta}_j - \hat{\beta}_k)\\
    &\rightarrow v^\top(\hat{\beta}_j - \hat{\beta}_k).
\end{align*}
\end{proof}

\begin{lemma}
Suppose $x = \lambda v, \lambda > 0$. Then
    \begin{equation*}
        \lim_{\lambda\rightarrow\infty} \frac{\hat{\sigma}_{ij}(x)}{\lambda}  = \sqrt{v^\top\widehat{Var}(\hat{\beta}_j - \hat{\beta}_k)v},
    \end{equation*}
    where $\widehat{Var}(\hat{\beta}_j - \hat{\beta}_k)$ is an estimate of the variance of the difference of covariate parameters for items $j$ and $k$.
    \label{lem:asymptotic_std}
\end{lemma}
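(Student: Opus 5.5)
The plan is to compute $\hat{\sigma}_{jk}(x)^2$ explicitly as a quadratic form in $\lambda$ and then take the limit. By definition, $\hat{\sigma}_{jk}(x)^2 = \hat{se}_{jk}(x)^2 = \tilde{x}_{jk}^\top \hat{\Sigma}\, \tilde{x}_{jk}$. Here $\tilde{x}_{jk}$ is the design vector \eqref{eq:design_vector} evaluated at the fixed covariate $x=\lambda v$. We write it as $\tilde{x}_{jk} = a + \lambda b$, where
\[
a = \bigl((e_k - e_j)^\top,\ 0^\top\bigr)^\top,\qquad
b = \bigl(0^\top,\ ((e_k-e_j)\otimes v)^\top\bigr)^\top .
\]
The orientation convention for the pair is immaterial, since the variance of a difference is sign invariant. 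The vector $a$ picks out the intercept difference, and $b$ picks out the projected slope difference $v^\top(\beta_k-\beta_j)$.

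Expanding the quadratic form gives
\[
\hat{\sigma}_{jk}(x)^2 = a^\top\hat{\Sigma}a + 2\lambda\, a^\top\hat{\Sigma} b + \lambda^2\, b^\top\hat{\Sigma} b .
\]
By construction, $b^\top\hat{\Sigma}b$ is the estimated variance of $v^\top(\hat{\beta}_j-\hat{\beta}_k)$, taken from the corresponding slope block of the bootstrap covariance $\hat{\Sigma}$. This is exactly what the statement denotes by $v^\top\widehat{Var}(\hat{\beta}_j-\hat{\beta}_k)v$, and identifying the two is a matter of block bookkeeping with the Kronecker structure. Dividing by $\lambda^2$, we get
\[
\hat{\sigma}_{jk}(x)^2/\lambda^2 = b^\top\hat{\Sigma}b + O(1/\lambda).
\]
The remainder is $O(1/\lambda)$ because $\hat{\Sigma}$ is a fixed matrix: the limit is taken with the data held fixed, so the entries of $\hat{\Sigma}$ are finite constants. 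Since $\lambda>0$ and square root is continuous on $[0,\infty)$, we obtain
\[
\hat{\sigma}_{jk}(x)/\lambda = \sqrt{\hat{\sigma}_{jk}(x)^2/\lambda^2} \longrightarrow \sqrt{v^\top\widehat{Var}(\hat{\beta}_j-\hat{\beta}_k)v}.
\]

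The only real subtlety is interpretive. We must make sure the variance estimate $\hat{\Sigma}$ does not depend on $x$. It does not: the bootstrap covariance is computed once from the data, and only the contrast vector changes with $x$. Given that, the proof is an elementary polynomial limit. The result holds even when $b^\top\hat{\Sigma}b=0$, because the cross term divided by $\lambda^2$ still vanishes.
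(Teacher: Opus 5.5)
Your proof is correct and follows essentially the same route as the paper: both expand $\hat{\sigma}_{jk}(x)^2$ as a quadratic in $\lambda$ (intercept variance, a cross-covariance term linear in $\lambda$, and the slope term $\lambda^2 v^\top\widehat{Var}(\hat{\beta}_j-\hat{\beta}_k)v$), divide by $\lambda^2$, and pass the limit through the square root. Your decomposition $\tilde{x}_{jk}=a+\lambda b$ is a matrix restatement of the paper's variance--covariance expansion, and it gets the cross term as $+2\lambda$ where the paper writes $-2\lambda$; the sign does not affect the limit.
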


\begin{proof}
    For the estimated standard deviation of the difference of utilities
\begin{align*}
    \frac{\hat{\sigma}_{jk}(x)}{\lambda} &= \frac{1}{\lambda}\sqrt{\widehat{Var}(\hat{\beta}_{0j} - \hat{\beta}_{0k} + \lambda v^\top(\hat{\beta}_j - \hat{\beta}_k))} \\
    &=\frac{1}{\lambda}\sqrt{\widehat{Var}(\hat{\beta}_{0j} - \hat{\beta}_{0k}) + \lambda^2 v^\top\widehat{Var}(\hat{\beta}_j - \hat{\beta}_k)v - 2\lambda v^\top \widehat{Cov}(\hat{\beta}_{0j} - \hat{\beta}_{0k},\hat{\beta}_j - \hat{\beta}_k )}\\
    &=\sqrt{\frac{\widehat{Var}(\hat{\beta}_{0j} - \hat{\beta}_{0k})}{\lambda^2} + v^\top\widehat{Var}(\hat{\beta}_j - \hat{\beta}_k)v - \frac{2 v^\top \widehat{Cov}(\hat{\beta}_{0j} - \hat{\beta}_{0k},\hat{\beta}_j - \hat{\beta}_k )}{\lambda}}\\
    &\rightarrow \sqrt{v^\top\widehat{Var}(\hat{\beta}_j - \hat{\beta}_k)v}.
\end{align*}
\end{proof}

\begin{lemma}
Suppose $x = \lambda v, \lambda > 0$. Then
    \begin{equation*}
        \lim_{\lambda\rightarrow\infty} \mathcal{T}(x)  = \mathcal{T}_{\infty}(x) = \max_{(j, k) \in S} \frac{|v^\top(\hat{\beta}_j - \hat{\beta}_k - (\beta_j - \beta_k))|}{\sqrt{v^\top\widehat{Var}(\hat{\beta}_j - \hat{\beta}_k)v}}.
    \end{equation*}
    \label{lem:asymptotic_max_statistic}
\end{lemma}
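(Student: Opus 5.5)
We will show that each standardized coordinate of the max statistic converges, as $\lambda\to\infty$, to its slope-only counterpart, and then pass the limit through the finite maximum. Fix $(j,k)\in S$ and write $x=\lambda v$. The design vector is $\tilde{x}_{jk}(\lambda v)=(e_k-e_j,\ \lambda (e_k-e_j)\otimes v)$, so the numerator of the $(j,k)$ coordinate of $\mathcal{T}(x)$ is
\[
\tilde{x}_{jk}^\top(\hat{\beta}-\tilde{\beta}^*) = (\hat{\beta}_{0k}-\hat{\beta}_{0j})-(\beta_{0k}-\beta_{0j}) + \lambda\, v^\top\bigl(\hat{\beta}_k-\hat{\beta}_j-(\beta_k-\beta_j)\bigr).
\]
Dividing by $\lambda$ and arguing exactly as in Lemma~\ref{lem:asymptotic_score_difference}, the intercept part is a fixed quantity for fixed $L$ and vanishes after division by $\lambda$. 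Hence $\lambda^{-1}\tilde{x}_{jk}^\top(\hat{\beta}-\tilde{\beta}^*)\to v^\top(\hat{\beta}_j-\hat{\beta}_k-(\beta_j-\beta_k))$, up to a sign that disappears under the absolute value.

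For the denominator, Lemma~\ref{lem:asymptotic_std} already gives $\hat{se}_{jk}(\lambda v)/\lambda\to\sqrt{v^\top\widehat{\mathrm{Var}}(\hat{\beta}_j-\hat{\beta}_k)v}$. The standardized coordinate equals the ratio of the two normalized quantities, since the factors of $\lambda$ cancel. The quotient rule for limits then gives convergence of each coordinate, provided the limiting denominator is nonzero. Finally, the map $z\mapsto\max_{(j,k)\in S}|z_{jk}|$ is continuous on $\mathbb{R}^{|S|}$ and $S$ is finite, so the limit passes through the maximum. This yields $\mathcal{T}(\lambda v)\to\mathcal{T}_\infty$ pointwise, that is, for every realization of $\hat{\beta}$ and $\hat{\Sigma}$, which is the almost-sure convergence used in Proposition~\ref{prop:asymptotic_extrapolation}. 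The same argument works for the one-sided statistics $\mathcal{T}_{\mathrm{lower}}$ and $\mathcal{T}_{\mathrm{upper}}$, with the maximum of signed coordinates in place of absolute values.

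The main obstacle is nondegeneracy of the limiting denominator, which requires $v^\top\widehat{\mathrm{Var}}(\hat{\beta}_j-\hat{\beta}_k)v>0$ for every pair. I would justify it as follows. Theorem~\ref{thm:mle_asymptotic_normality} shows the limiting covariance $(\mathcal{P}\bar{\mathcal{I}}\mathcal{P})^\dagger$ is positive definite on $\Theta$, because $\bar{\mathcal{I}}$ is positive definite there by Assumption~\ref{as:limit_fisher}. The vector $(0,(e_j-e_k)\otimes v)$ lies in $\Theta$ whenever $v\neq0$, since it already satisfies the constraint $C\tilde{\beta}=0$. Hence the population quadratic form is positive, and the bootstrap estimate $\hat{\Sigma}$ inherits this with probability tending to one by the consistency result \eqref{eq:cov_consistency}. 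I would state this explicitly as a side condition: the limit holds on the event that $\hat{\Sigma}$ restricted to the slope block is nonsingular in direction $v$. A secondary subtlety is that $\mathcal{T}$ involves the unknown $\tilde{\beta}^*$. I would treat it, as in the statement, as a random variable whose distribution is approximated by the simulated Gaussian. The pointwise limit therefore holds both for the realized statistic and for the simulated draws from $\mathcal{N}(0,\tilde{X}_S\hat{\Sigma}\tilde{X}_S^\top)$, which is what Lemma~\ref{lem:asymptotic_quantile} will need.
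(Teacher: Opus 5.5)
Your proposal is correct and follows essentially the same route as the paper: expand each standardized coordinate, divide numerator and denominator by $\lambda$ so the intercept contributions vanish (as in Lemmas~\ref{lem:asymptotic_score_difference} and~\ref{lem:asymptotic_std}), and pass the limit through the finite maximum. The paper's proof uses the nondegeneracy condition $v^\top\widehat{\mathrm{Var}}(\hat{\beta}_j-\hat{\beta}_k)v>0$ (with $v\neq 0$) only implicitly, so your stating it explicitly as a side condition is a useful tightening rather than a different approach.
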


\begin{proof}
For the max statistic
    \begin{align*}
    \mathcal{T}(x) &= \max_{(j, k) \in S} \frac{|\hat{\beta}_{0j} - \hat{\beta}_{0k} - ({\beta}_{0j} - {\beta}_{0k}) + \lambda v^\top(\hat{\beta}_j - \hat{\beta}_k - (\beta_j - \beta_k))|}{\hat{\sigma}_{jk}(x)} \\
    &= \max_{(j, k) \in S}\frac{|\frac{\hat{\beta}_{0j} - \hat{\beta}_{0k} - ({\beta}_{0j} - {\beta}_{0k})}{\lambda} + v^\top(\hat{\beta}_j - \hat{\beta}_k - (\beta_j - \beta_k))|}{|\sqrt{\frac{\widehat{Var}(\hat{\beta}_{0j} - \hat{\beta}_{0k})}{\lambda^2} + v^\top\widehat{Var}(\hat{\beta}_j - \hat{\beta}_k)v - \frac{2 v^\top \widehat{Cov}(\hat{\beta}_{0j} - \hat{\beta}_{0k},\hat{\beta}_j - \hat{\beta}_k)}{\lambda}}} \\
    &\rightarrow \max_{(j, k) \in S} \frac{|v^\top(\hat{\beta}_j - \hat{\beta}_k - (\beta_j - \beta_k))|}{\sqrt{v^\top\widehat{Var}(\hat{\beta}_j - \hat{\beta}_k)v}}\\
    &= \mathcal{T}_{\infty}(x) 
\end{align*}
\end{proof}

\begin{lemma}
Suppose $x = \lambda v, \lambda > 0$. Then
\begin{equation*}
    \lim_{\lambda\rightarrow\infty} t_{\alpha/2}(x) = t_{\alpha/2}^\infty(x),
\end{equation*}
where $t_{\alpha/2}^\infty(x)$ denotes the $(1-\alpha)$ quantile of the limiting max statistic $\mathcal{T}_\infty(x)$.
    \label{lem:asymptotic_quantile}
\end{lemma}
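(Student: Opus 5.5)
The plan is to read the critical value $t_{\alpha/2}(x)$ at $x=\lambda v$ as a quantile of a max statistic whose law is a fixed Gaussian law composed with a continuous map that depends on $\lambda$. We then transfer the pointwise limit of Lemma~\ref{lem:asymptotic_max_statistic} into convergence of quantiles. The critical value is computed by simulating $W\sim\mathcal{N}(0,\hat{\Sigma})$ and taking the $(1-\alpha)$ quantile of
\[
\mathcal{T}_\lambda(W)=\max_{(j,k)\in S}\frac{|\tilde{x}_{jk}(\lambda v)^\top W|}{\hat{\sigma}_{jk}(\lambda v)}.
\]
Here $\tilde{x}_{jk}(\lambda v)^\top W = (W_{0j}-W_{0k})+\lambda v^\top(W_j-W_k)$. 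Dividing numerator and denominator by $\lambda$ and using Lemma~\ref{lem:asymptotic_std} for the denominator, we get $\mathcal{T}_\lambda(W)\to\mathcal{T}_\infty(W)$ for every realization $W$. The limit is
\[
\mathcal{T}_\infty(W)=\max_{(j,k)\in S}\frac{|v^\top(W_j-W_k)|}{\sqrt{v^\top\widehat{\mathrm{Var}}(\hat{\beta}_j-\hat{\beta}_k)v}} .
\]
This is the same computation as in Lemma~\ref{lem:asymptotic_max_statistic}. It needs the nondegeneracy condition $v^\top\widehat{\mathrm{Var}}(\hat{\beta}_j-\hat{\beta}_k)v>0$ for all $(j,k)\in S$, which I will state explicitly; it holds under positive definiteness of $\hat\Sigma$ on $\Theta$ whenever $v\neq 0$. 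Pointwise convergence everywhere implies almost sure convergence, hence convergence in distribution: $\mathcal{T}_\lambda\xrightarrow{d}\mathcal{T}_\infty$ as $\lambda\to\infty$.

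Next I will invoke the standard quantile-convergence lemma (van der Vaart, Lemma 21.2). If $F_\lambda\to F_\infty$ weakly and $F_\infty$ is continuous and strictly increasing at $q_\infty=F_\infty^{-1}(1-\alpha)$, then $F_\lambda^{-1}(1-\alpha)\to q_\infty$. The content of the step is therefore to verify the regularity of $F_\infty$. $\mathcal{T}_\infty(W)$ is the maximum of the absolute values of a nondegenerate finite Gaussian vector, possibly with correlations equal to $\pm1$. Thus $F_\infty$ has no atoms, since each $|Z_{jk}|$ is continuous and a finite max of continuous variables is atomless. It is also strictly increasing on $(0,\infty)$, because $P(a<\mathcal{T}_\infty\le b)\ge P(a<|Z_{j_0k_0}|\le b,\ \text{others small})>0$. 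This last claim follows from the Gaussian support argument, or more simply from Tsirelson-type results that the distribution of the max of a Gaussian vector is absolutely continuous with positive density on the interior of its support.

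I expect this regularity step to be the main obstacle. Wherever $\hat\Sigma$ restricted to the slope directions is singular, I will check that support issues are handled by restricting to the subspace spanned by the $v^\top(W_j-W_k)$, which contains a nondegenerate coordinate. Finally, I will address the index mismatch: the lemma writes $t_{\alpha/2}$ but describes the $(1-\alpha)$ quantile. Since the argument is identical for any fixed level, I will prove it for a generic level $\gamma\in(0,1)$ and specialize to the one used in \eqref{eq:score_diff_cis}.
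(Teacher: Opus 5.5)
Your proposal is correct and takes the same route as the paper: the pointwise limit of the max statistic from Lemma~\ref{lem:asymptotic_max_statistic} gives convergence in distribution, and regularity of the limiting CDF then carries this over to the quantiles. You are more careful than the paper's proof, which invokes only continuity of the limiting CDF and the continuous mapping theorem; quantile convergence actually needs the strict increase at $q_\infty$ and the nondegeneracy $v^\top\widehat{\mathrm{Var}}(\hat{\beta}_j-\hat{\beta}_k)v>0$, both of which you check. One small fix: your event ``others small'' can be empty (e.g.\ $|Z_{kj}|=|Z_{jk}|$ when both orders lie in $S$), so argue strict increase via the homogeneity $\mathcal{T}_\infty(cW)=c\,\mathcal{T}_\infty(W)$ for $c>0$ together with the positive Gaussian density on the support subspace of $W$.
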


\begin{proof}
The statistic $\mathcal{T}_\infty(x)$ is a continuous function of $\hat{\beta}$. The distribution of the maximum has a continuous Cumulative Distribution Function. By Lemma~\ref{lem:asymptotic_max_statistic} and the Continuous Mapping Theorem
\begin{equation*}
    t_{\alpha/2}(x) = \inf\{z: \mathbb{P}(\mathcal{T}(x) \geq z) \geq 1-\frac{\alpha}{2}\}\rightarrow \inf\{z: \mathbb{P}(\mathcal{T}_\infty(x)  \geq z) \geq 1-\frac{\alpha}{2}\} = t_{\alpha/2}^\infty(x)
\end{equation*}
as $\lambda\rightarrow\infty$.
\end{proof}

\end{document}